\def\eqref#1{equation~\ref{#1}}
\def\Eqref#1{Equation~\ref{#1}}
\def\1{\bm{1}}
\def\vb{{\bm{b}}}
\def\vx{{\bm{x}}}
\def\vy{{\bm{y}}}
\def\mA{{\bm{A}}}
\def\mB{{\bm{B}}}
\def\mD{{\bm{D}}}
\def\mH{{\bm{H}}}
\def\mI{{\bm{I}}}
\def\mM{{\bm{M}}}
\def\mW{{\bm{W}}}
\def\mX{{\bm{X}}}
\def\mZ{{\bm{Z}}}
\DeclareMathAlphabet{\mathsfit}{\encodingdefault}{\sfdefault}{m}{sl}
\SetMathAlphabet{\mathsfit}{bold}{\encodingdefault}{\sfdefault}{bx}{n}
\def\gG{{\mathcal{G}}}
\def\gL{{\mathcal{L}}}
\newcommand{\E}{\mathbb{E}}
\newcommand{\KL}{D_{\mathrm{KL}}}
\acrodef{GNN}[GNN]{graph neural network}
\acrodef{MLP}[MLP]{multi-layer perceptron}
\newcommand{\agg}[0]{$\text{AGG}$\xspace}
\newcommand{\module}[0]{$\text{AGG}_{B}$\xspace}
\newcommand{\modulelong}[0]{\emph{Aggregation Buffer}\xspace}
\newcommand{\loss}[0]{$\tilde{\gL}_\mathrm{Q}$\xspace}
\newcommand{\mypar}[1]{\textbf{#1.}\hspace{4pt}}
\newcommand{\modified}[1]{\textcolor{black}{#1}}
\theoremstyle{plain}
\newtheorem{theorem}{Theorem}[section]
\newtheorem{lemma}[theorem]{Lemma}
\theoremstyle{definition}
\newtheorem{definition}[theorem]{Definition}
\theoremstyle{remark}
\icmltitlerunning{Aggregation Buffer: Revisiting DropEdge with a New Parameter Block}
\begin{document}

\twocolumn[
\icmltitle{Aggregation Buffer: Revisiting DropEdge with a New Parameter Block}



\icmlsetsymbol{equal}{*}

\begin{icmlauthorlist}
\icmlauthor{Dooho Lee}{kaistee}
\icmlauthor{Myeong Kong}{kaistee}
\icmlauthor{Sagad Hamid}{kaistee,umscs}
\icmlauthor{Cheonwoo Lee}{kaistee}
\icmlauthor{Jaemin Yoo}{kaistee}
\end{icmlauthorlist}

\icmlaffiliation{kaistee}{School of Electrical Engineering, KAIST, Daejeon, Republic of Korea}
\icmlaffiliation{umscs}{Computer Science Department, University of Münster, Münster, Germany}

\icmlcorrespondingauthor{Jaemin Yoo}{jaemin@kaist.ac.kr}

\icmlkeywords{Machine Learning, ICML}

\vskip 0.3in
]



\printAffiliationsAndNotice{}  

\begin{abstract}
We revisit DropEdge, a data augmentation technique for GNNs which randomly removes edges to expose diverse graph structures during training. While being a promising approach to effectively reduce overfitting on specific connections in the graph, we observe that its potential performance gain in supervised learning tasks is significantly limited.
To understand why, we provide a theoretical analysis showing that the limited performance of DropEdge comes from the fundamental limitation that exists in many GNN architectures.
Based on this analysis, we propose \emph{Aggregation Buffer}, a parameter block specifically designed to improve the robustness of GNNs by addressing the limitation of DropEdge.
Our method is compatible with any GNN model, and shows consistent performance improvements on multiple datasets.
Moreover, our method effectively addresses well-known problems such as degree bias or structural disparity as a unifying solution.
Code and datasets are available at \url{https://github.com/dooho00/agg-buffer}.
\end{abstract}



\section{Introduction}
Graph-structured data are pervasive across various research fields and real-world applications, as graphs naturally capture essential relationships among entities in complex systems.
\Acp{GNN} have emerged as a powerful framework to effectively incorporate these relationships for graph-related tasks.
In contrast to traditional \acp{MLP}, which solely consider node features, \acp{GNN} additionally take advantage of edge information to incorporate crucial interrelations between node features \cite{gcn, appnp, sage, gat}. 
As a consequence, \acp{GNN} are able to account for interaction patterns and structural dependencies, a source of knowledge that enables improving the performance in semi-supervised learning tasks, even with limited observations \cite{pinsage, gat2, semi-superised-survey}.

While leveraging edge structure has proven highly effective, it often makes a GNN overfit to certain structural properties of nodes mainly observed in the training data.
As a result, the model's performance suffers considerably in the presence of \textit{structural inconsistencies}.
For example, it is widely known that GNNs perform worse on low-degree nodes than on high-degree nodes even when their features are highly informative, since high-degree nodes are the main source of information for their training \cite{investigatedegreebias, degfairgnn, degreebiastheory}.
Moreover, GNNs exhibit poor accuracy on nodes whose neighbors have conflicting structural properties, such as heterophilous neighbors in homophilous graphs, or vice versa \cite{understandingheterophily, structuraldisparity}.
These problems clearly highlight the two faces of \acp{GNN}--their reliance on edge structure is the key to their success, while also making them more vulnerable.


Common approaches to enhance robustness against input data variations in supervised learning are \emph{random dropping} techniques such as DropOut \cite{dropout}. 
For \acp{GNN}, DropEdge \cite{dropedge} has been introduced as a means to increase the robustness against edge perturbations.
DropEdge removes a random subset of edges at each iteration, exposing a \ac{GNN} to diverse structural information.
However, the performance gain by DropEdge is limited in practice, and DropEdge is typically excluded from the standard hyperparameter search space of \acp{GNN} in benchmark studies \cite{benchmarkgnn, tunedGNN}.

In this work, we provide a theoretical analysis on the reason why DropEdge fails.
We study the \emph{objective shift} caused by DropEdge and highlight the implicit bias-robustness trade-off in its objective function.
Then, we prove that the failure of DropEdge is not because of its algorithm, but the inductive bias existing in most GNN architectures, based on the concept of \emph{discrepancy bound} in comparison to \acp{MLP}.


Building on these insights, we propose \modulelong (\module), a new parameter block which can be integrated to to any trained GNN as a post-processing procedure.
\module effectively addresses the architectural limitation of \acp{GNN}, allowing DropEdge to significantly enhance the robustness of \acp{GNN} compared to its original working mechanism.
We demonstrate the effectiveness of \module in improving the robustness and overall accuracy of \acp{GNN} across 12 node classification benchmarks. 
In addition, we show that \module works as a unifying solution to structural inconsistencies such as degree bias and structural disparity, both of which arise from structural variations in graph datasets.

\label{introduction}
\section{Preliminaries}

\mypar{Notation}
Let \(\gG = (V, E)\) be an undirected graph, where \(V\) is the set of nodes and \(E\) is the set of edges.
We denote the adjacency matrix by \(\mA \in \{0, 1\}^{|V| \times |V|}\), where \(a_{ij} = 1\) if there is an edge between nodes \(i\) and \(j\), and \(a_{ij} = 0\) otherwise.
The node feature matrix is denoted by \(\mX \in \mathbb{R}^{|V| \times {d_0}}\), where \(d_0\) is the dimensionality of features. 

\mypar{Graph Neural Network}
A graph neural network (GNN) consists of multiple layers, each performing two key operations: \emph{aggregate} (AGG) and \emph{update} (UPDATE) \citep{MessagePassing, pre-trainGNN}. 
For each node, AGG gathers information from its neighboring nodes in the graph structure, while UPDATE combines the aggregated information with the node's previous representation. With $\mH^{(0)} = \mX$, we formally define the $l$-th layer $\mH^{(l)} \in \mathbb{R}^{|V| \times d_l}$ as
\begin{equation*}
\begin{split}
&\mH_\mathcal{N}^{(l)} = \mathrm{AGG}^{(l)}(\mH^{(l-1)}, \mA),\\
&\mH^{(l)} = \mathrm{UPDATE}^{(l)}(\mH_\mathcal{N}^{(l)}, \mH^{(l-1)}),
\end{split}
\end{equation*}
where $d_l$ is the dimensionality of embeddings from the $l$-th layer, and a learnable weight matrix $\mW^{(l)} \in \mathbb{R}^{d_{l-1} \times d_{l}}$ is typically used to transform representations between layers.
We also denote \(\mH^{(s:t)}\in \mathbb{R}^{|V| \times (d_{s} + \dots + d_{t})}\) as the concatenation of node embeddings from layer $s$ to $t$ along the feature dimension, as $\mH^{(s:t)} = \mH^{(s)} \| \dots \| \mH^{(t)}$, where $||$ denotes the concatenation operator and $s < t$.
\section{Revisiting DropEdge}
\label{sec:dropedge}

We give an overview of DropEdge and formalize its \emph{objective shift} in node-level tasks.
We then analyze the implicit bias-robustness trade-off in its objective and exhibit an unexpected failure of DropEdge on improving robustness.

\subsection{Overview of DropEdge}
\label{DropOut_and_DropEdge}


DropEdge \cite{dropedge} is a data augmentation algorithm for improving the generalizability of \acp{GNN} by introducing stochasticity during its training.
More specifically, it modifies the graph's adjacency matrix $\mA$ using a binary mask $\bm{M} \in \{0, 1\}^{|V| \times |V|}$, by creating an adjacency matrix $\tilde{\mA} = \mM \odot \mA$,
where $\odot$ is the element-wise multiplication between matrices.
The matrix $\mM$ is generated randomly to drop a subset of edges by setting their values to zero.

\begin{figure}[t]
\vskip 0.1in
\begin{center}
\centerline{\includegraphics[width=\columnwidth]{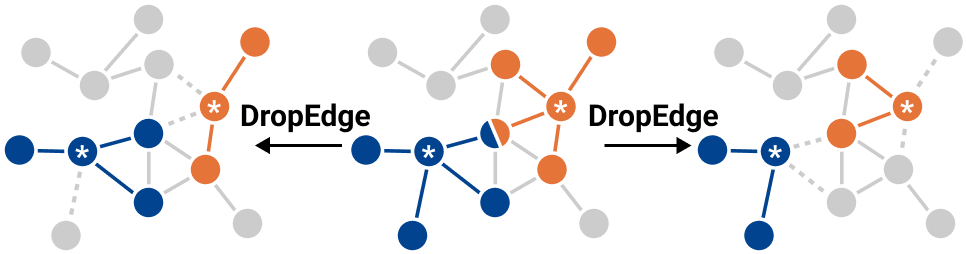}}
\caption{DropEdge generates various reduced rooted subgraphs for center nodes (*) by randomly removing edges.}
\label{illust-dropedge}
\end{center}
\vskip -0.2in
\end{figure}

In node-level tasks, a GNN can be considered as taking the $k$-hop subgraph of each node as its input.
For each node $i$, the edge removal operation in DropEdge can be interpreted as transforming the rooted subgraph  $\mathcal{G}_i$, centered on node  $i$, into a reduced rooted subgraph, denoted as  $\tilde{\mathcal{G}}_i$.
Figure~\ref{illust-dropedge} illustrates how DropEdge modifies a rooted subgraph.

\begin{definition}[Rooted Subgraph]
\label{Rooted Subgraph}
A \emph{rooted subgraph} $\mathcal{G}_i = (V_i, E_i)$ is a $k$-hop subgraph centered on node $i$, where $V_i$ is the set of nodes within the $k$-hop neighborhood of node $i$ and $E_i$ denotes the set of edges between nodes in $V_i$.
\end{definition}

\begin{definition}[Reduced Rooted Subgraph]
Given a rooted subgraph $\gG_i$ as an input, a \emph{reduced rooted subgraph} $\tilde{\gG}_i = (\tilde{V_i}, \tilde{E_i})$ is a subgraph of $\gG_i$ created by DropEdge, where $\tilde{V_i}\subseteq V_i$ and $\tilde{E}_i \subseteq E_i$ is the edge set induced by $\tilde{V_i}$.
\end{definition}


\subsection{Bias-Robustness Trade-off}
\label{Bias-Robustness Tradeoff}
In a typical classification task, the objective is to minimize the \acp{KL} divergence between the true posterior \(P(\vy_i|\mathcal{G}_i)\) and the modeled one \(Q(\vy_i|\mathcal{G}_i)\) as
\begin{equation}
\label{objective_regular}
\gL(\theta) = \KL (P(\vy_i |\mathcal{G}_i) \Vert Q(\vy_i|\mathcal{G}_i)),
\end{equation}
where $\theta$ is the set of parameters used for modeling $Q$.
When DropEdge is used during the training of a \ac{GNN}, it perturbs the given rooted subgraph and creates a reduced subgraph $\tilde{\gG_i}$ which leads to the following shifted objective function:
\begin{equation}
\label{objective_with_dropping}
\tilde{\gL}(\theta) = \KL (P(\vy_i |\gG_i) \Vert Q(\vy_i|\tilde{\gG}_i)).
\end{equation}
The shifted objective $\tilde{\gL}$ can be decomposed as follows:
\begin{multline}
\label{objective_decomposition}
\tilde{\gL}(\theta) = \KL (P(\vy_i |\mathcal{G}_i)\Vert Q(\vy_i |\mathcal{G}_i))
+ \\
\E_{P}[ \log Q(\vy_i |\gG_i) - \log Q(\vy_i|\tilde{\gG_i})] .
\end{multline}
The first term corresponds to the standard objective function $\gL(\theta)$ in \Eqref{objective_regular}, which ensures that optimizing $\tilde{\gL}(\theta)$ remains aligned with its intended purpose. 
It particularly measures how well the model approximates the true posterior and can be referred to as \emph{bias}, as it relies on observed labels collected to represent the true distribution.

The second term measures the expected difference between 
$\log Q(\vy_i |\gG_i)$ and $\log Q(\vy_i|\tilde{\gG_i})$.
This can be understood as measuring the \emph{robustness} of a GNN against edge perturbations, as it is minimized s.t.\ the \ac{GNN} produces consistent predictions for $\gG_i$ and $\tilde{\gG}_i$.
However, as the expectation involves the true distribution $P$, it can only be computed if $P$ is known.
By assuming that $Q$ is sufficiently close to $P$, we can rewrite $\tilde{\gL}$ as follows:
\begin{multline}
\label{approx_objective_decomposition}
    \tilde{\gL}_\mathrm{Q}(\theta) = \KL (P(\vy_i |\mathcal{G}_i)\Vert Q(\vy_i |\mathcal{G}_i))
    + \\
    \KL (Q(\vy_i |\mathcal{G}_i) \Vert Q(\vy_i|\tilde{\mathcal{G}_i})).
\end{multline}
\modified{We discuss the validity of this approximation in \Cref{Validty of Approximation}.}
The interplay between the terms in $\tilde{\gL}_\mathrm{Q}$ naturally introduces a \emph{bias-robustness trade-off}.
The first term, which is equal to $\gL$, enables learning the true posterior accurately.
The second term works as a regularizer, promoting consistency across different reduced rooted subgraphs.
Finding an optimal balance between bias and robustness is key to maximize the performance of GNNs on unseen test graphs.





\begin{figure}[t]
\vskip 0.0in
\begin{center}
\centerline{\includegraphics[width=\columnwidth]{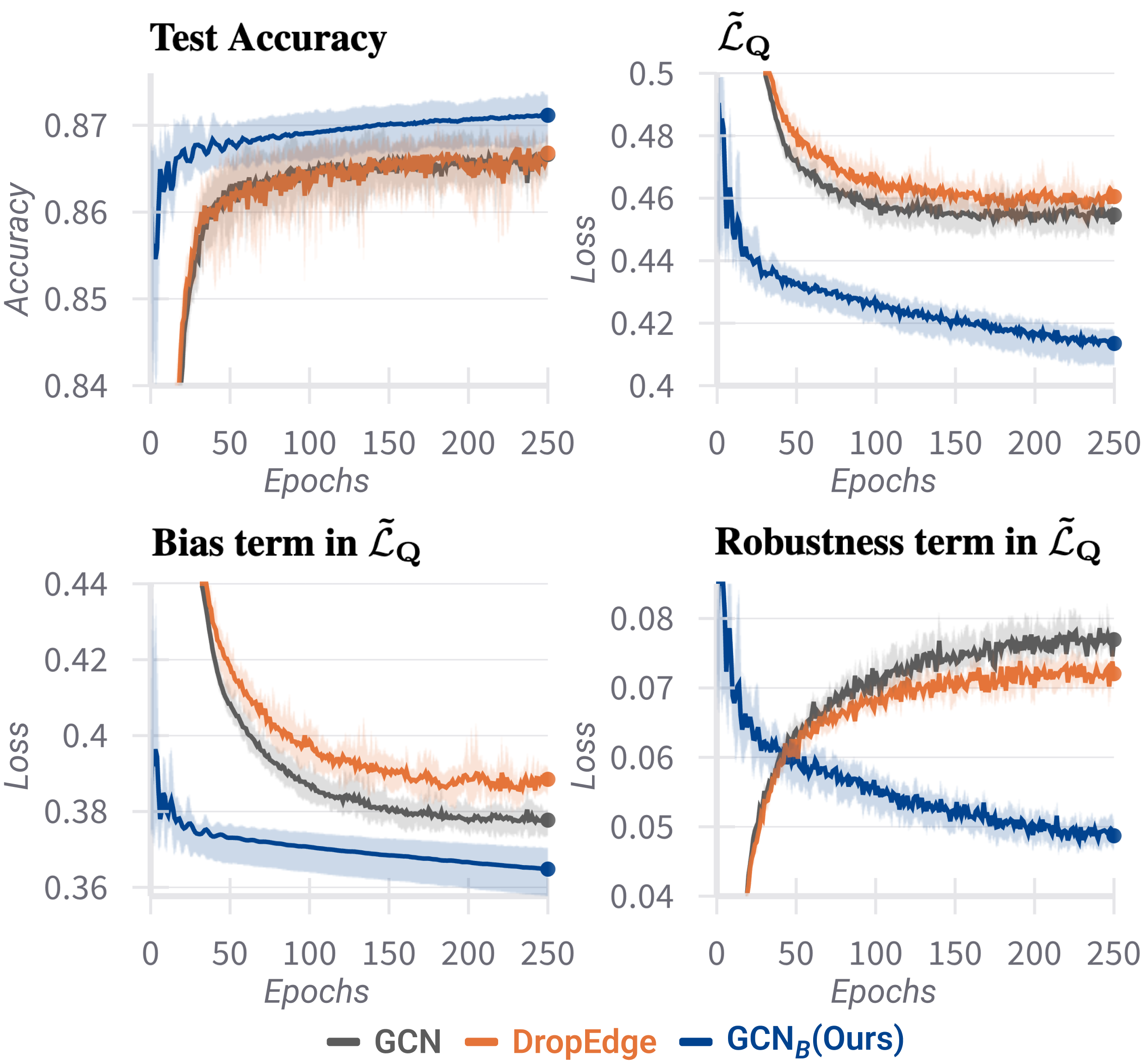}}
\caption{
    Accuracy and loss terms on test data during the training of a \modified{GCN} at PubMed.
    We illustrate the average of 10 independent runs, with shaded regions representing the minimum and maximum values.
    While DropEdge decreases the robustness term compared to standard \acp{GNN}, it leads to increasing the bias term, \modified{eventually} resulting in similar test accuracy to standard GNNs.
}
\label{Loss_Curve}
\end{center}
\vskip -0.2in
\end{figure}

\subsection{Unexpected Failure of DropEdge }
\label{Limitations_of_DropEdge}

DropEdge is designed to enhance the robustness of GNNs against structural perturbations.
To evaluate its effectiveness, we train a GNN with and without DropEdge and measure \loss on the test set.
As shown in \Cref{Loss_Curve}, DropEdge successfully regularizes the robustness term compared to standard \acp{GNN}.
However, this comes at the cost of increasing the bias term, leading to a similar total \loss and no overall performance improvement. 
It is notable that we have carefully tuned the drop ratio of DropEdge for this example, suggesting that other drop ratios would lead to degradation.

This behavior is consistently observed across all datasets in our experiments, raising the question of whether DropEdge can truly improve the performance.
While a trade-off between bias and robustness is expected, this outcome is unusual compared to data augmentation methods in other domains \cite{dropout, cutout, tokendrop}.
In most cases, small perturbations of data do not significantly interfere with the primary learning objective, allowing robustness optimization to improve generalization. 
However, in GNNs trained with DropEdge, optimizing robustness immediately increases the bias term on test data, preventing sufficient robustness to be achieved.

This phenomenon highlights a fundamental challenge: the minimization of \loss, in terms of both bias and robustness, is inherently difficult to achieve within the standard training framework of \acp{GNN}, limiting the effectiveness of DropEdge and similar techniques in improving edge-robustness.

\subsection{Reason of the Failure: Core Limitations of \acp{GNN}}


The robustness term in \loss can be optimized only when a GNN is able to produce similar outputs for different adjacency matrices, namely $\mA$ and $\tilde{\mA}$.
To study the poor efficacy of DropEdge, we analyze how well a GNN can bound the difference between its outputs given different inputs, which we refer to as the \emph{discrepancy bound}.
Our key observation is that the failure of DropEdge is not rooted in its algorithm but rather in the inductive bias of \acp{GNN}, suggesting that it cannot be addressed optimally with existing \ac{GNN} layers.

\begin{definition}[Discrepancy bound]
    Let $\mH_1^{(l)}$ and $\mH_2^{(l)}$ be the outputs of the $l$-th layer of a network $f$ given different inputs \smash{$\mH_1^{(l-1)}$} and \smash{$\mH_2^{(l-1)}$}.
    The discrepancy bound of $f$ at the $l$-th layer is a constant $C$, such that 
    \begin{equation*}
        \|\mH_1^{(l)} - \mH_2^{(l)}\|_2 \leq C\|\mH_1^{(l-1)} - \mH_2^{(l-1)}\|_2 ,
    \end{equation*}
    where $C$ is independent of the specific inputs.
\end{definition}
As a comparison, we first study the discrepancy bound of \acp{MLP} in \Cref{theorem_layerwise_discrepancy_cascade} and move on to \acp{GNN}. 
Proofs for all theoretical results in this section are in \Cref{appendix_proof_section3}.


\begin{lemma}
\label{lemma_lipschitz continuity}
Commonly used activation functions—ReLU, Sigmoid, and GELU—and parameterized linear transformation satisfy Lipschitz continuity.
\end{lemma}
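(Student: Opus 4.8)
The plan is to prove Lipschitz continuity separately for each of the two claims—the activation functions and the parameterized linear transformation—since they are of fundamentally different character. Recall that a function $g$ is Lipschitz continuous with constant $L$ if $\|g(\vx) - g(\vy)\|_2 \leq L\|\vx - \vy\|_2$ for all inputs, so in each case it suffices to exhibit a finite $L$.

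For the linear transformation $g(\mH) = \mH\mW^{(l)}$, the key observation is that matrix multiplication is a bounded linear operator. I would write $\|\mH_1\mW - \mH_2\mW\|_2 = \|(\mH_1 - \mH_2)\mW\|_2$ by linearity, then invoke the submultiplicativity of the chosen matrix norm to bound this by $\|\mW\|_2\,\|\mH_1 - \mH_2\|_2$. Since the weight matrix is fixed at evaluation time, its spectral norm $\|\mW\|_2$ is a finite constant that serves as the Lipschitz constant. This part is essentially immediate.

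For the activation functions, I would handle them as scalar functions applied entrywise and use the standard fact that a differentiable scalar function is Lipschitz with constant equal to the supremum of the absolute value of its derivative (by the mean value theorem). For ReLU, the derivative is bounded by $1$ wherever it exists, and one checks the nondifferentiable point at the origin directly, giving $L = 1$; equivalently $|\mathrm{rectifier}(a) - \mathrm{rectifier}(b)| \le |a - b|$. For Sigmoid, $\sigma'(a) = \sigma(a)(1 - \sigma(a))$ is maximized at $a = 0$ with value $1/4$, so $L = 1/4$. For GELU, I would bound $|\mathrm{GELU}'|$ by a finite constant; its derivative is smooth and bounded, so such a constant exists (a crude uniform bound suffices, and the exact value is unimportant). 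Finally, to lift from the scalar bound to the vector/matrix statement, I would note that applying a scalar $L$-Lipschitz map entrywise preserves the Lipschitz property in the Euclidean (Frobenius) norm, since $\|g(\mH_1) - g(\mH_2)\|_2^2 = \sum_{i,j}|g(h^1_{ij}) - g(h^2_{ij})|^2 \le L^2\sum_{i,j}|h^1_{ij} - h^2_{ij}|^2$.

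The only genuine subtlety—hardly an obstacle—is ReLU's nondifferentiability at the origin, which blocks a naive mean-value-theorem argument; I would sidestep this either by a direct case analysis on the signs of the two scalar inputs or by noting that ReLU is $1$-Lipschitz as a composition $\max(0, a)$ of $1$-Lipschitz maps. Everything else reduces to standard norm inequalities, so I expect the proof to be short and largely mechanical.
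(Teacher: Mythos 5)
Your proposal is correct and follows essentially the same route as the paper: a case analysis (or composition argument) for ReLU, a derivative bound of $1/4$ plus the mean value theorem for Sigmoid, a bounded-derivative argument for GELU, and spectral-norm submultiplicativity for the linear map. The only cosmetic difference is that the paper pins down the GELU constant explicitly ($\approx 1.13$ via the critical points at $\pm\sqrt{2}$) whereas you leave it as an unspecified finite bound, which suffices for the statement as written.
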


\begin{lemma}
\label{theorem_layerwise_discrepancy_cascade}
Given an MLP with activation function $\sigma$, the discrepancy bound at the $l$-th layer is $C=L_\sigma \|\mW^{(l)}\|_2$ where $L_\sigma$ is the Lipschitz constant of $\sigma$.
\end{lemma}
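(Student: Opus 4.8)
The plan is to unfold a single MLP layer as the composition of a parameterized linear map $\mZ \mapsto \mZ\mW^{(l)}$ with the entrywise nonlinearity $\sigma$, and then chain a Lipschitz estimate for each piece. Concretely, for an MLP the $l$-th layer has the form $\mH^{(l)} = \sigma(\mH^{(l-1)}\mW^{(l)} + \vb^{(l)})$ with $\sigma$ applied elementwise, so for two inputs $\mH_1^{(l-1)}$ and $\mH_2^{(l-1)}$ the bias $\vb^{(l)}$ is a constant shift that cancels inside the difference, leaving
\[
\mH_1^{(l)} - \mH_2^{(l)} = \sigma(\mH_1^{(l-1)}\mW^{(l)} + \vb^{(l)}) - \sigma(\mH_2^{(l-1)}\mW^{(l)} + \vb^{(l)}).
\]
I would then bound this in two steps.

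First I would invoke \Cref{lemma_lipschitz continuity} to guarantee that $\sigma$ admits a finite Lipschitz constant $L_\sigma$, which gives
\[
\|\mH_1^{(l)} - \mH_2^{(l)}\|_2 \le L_\sigma\,\|\mH_1^{(l-1)}\mW^{(l)} - \mH_2^{(l-1)}\mW^{(l)}\|_2 .
\]
Next I would factor the linear map out of the right-hand side by linearity, writing $\mH_1^{(l-1)}\mW^{(l)} - \mH_2^{(l-1)}\mW^{(l)} = (\mH_1^{(l-1)} - \mH_2^{(l-1)})\mW^{(l)}$, and apply submultiplicativity to expose the operator norm $\|\mW^{(l)}\|_2$. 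Composing the two inequalities yields $\|\mH_1^{(l)} - \mH_2^{(l)}\|_2 \le L_\sigma\|\mW^{(l)}\|_2\,\|\mH_1^{(l-1)} - \mH_2^{(l-1)}\|_2$, so that $C = L_\sigma\|\mW^{(l)}\|_2$ as claimed, with a constant that is manifestly independent of the particular inputs.

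The one subtlety, and the step I would be most careful about, is keeping the two norms compatible, since this is where a naive argument can go wrong. The entrywise Lipschitz bound for $\sigma$ transfers cleanly to the \emph{Frobenius} norm, because that norm treats the state matrix as a flattened vector and $\|\sigma(\mZ_1)-\sigma(\mZ_2)\|_F^2 = \sum_{ij}(\sigma((\mZ_1)_{ij})-\sigma((\mZ_2)_{ij}))^2 \le L_\sigma^2\|\mZ_1-\mZ_2\|_F^2$; whereas the cleanest way to produce the factor $\|\mW^{(l)}\|_2$ is the mixed inequality $\|\mZ\mW^{(l)}\|_F \le \|\mZ\|_F\|\mW^{(l)}\|_2$ with $\|\mW^{(l)}\|_2$ the spectral norm. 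I would therefore read the discrepancy norm on the hidden states as the Frobenius norm and $\|\mW^{(l)}\|_2$ as the spectral (operator) norm. An equivalent and perhaps more transparent route is to argue per node: since $\sigma$ acts independently on each node's embedding and $\|\vz\mW^{(l)}\|_2 \le \|\vz\|_2\|\mW^{(l)}\|_2$ for each row $\vz$, one gets the bound nodewise with constant $L_\sigma\|\mW^{(l)}\|_2$ and then recombines by summing squares over nodes. Either route avoids the pitfall that entrywise Lipschitzness does not by itself control the spectral norm of $\sigma(\mZ_1)-\sigma(\mZ_2)$. Apart from this bookkeeping, the argument is a direct two-step chaining and needs no further machinery.
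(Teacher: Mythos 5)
Your proposal is correct and follows essentially the same route as the paper's proof: cancel the bias inside the difference, apply the Lipschitz bound for $\sigma$, and then factor out $\|\mW^{(l)}\|_2$ via submultiplicativity. Your added care about which norm the elementwise Lipschitz step is valid for (Frobenius on the hidden states, spectral on the weights) is a point the paper's proof glosses over by writing $\|\cdot\|_2$ uniformly, so your version is if anything slightly more rigorous.
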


\begin{theorem}
\label{corollary_discrepancy_cascade}
In an $L$-layer MLP with activation function $\sigma$, the discrepancy bound at the $L$-th layer can be derived for every intermediate layer $l < L$ as
\begin{equation*}
    \|\mH_1^{(L)} - \mH_2^{(L)}\|_2
    \leq C \|\mH_1^{(l)} - \mH_2^{(l)}\|_2,
\end{equation*}
where $C = L_\sigma^{(L-l)} \prod_{i=l+1}^{L} \| \mW^{(i)} \|_2$.
\end{theorem}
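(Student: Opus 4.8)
The plan is to obtain the multi-layer bound by chaining the single-layer discrepancy bound of \Cref{theorem_layerwise_discrepancy_cascade} across the layers $l+1, \dots, L$ and collecting the resulting constants into a single product. The argument is a straightforward induction on the number of layers traversed, so essentially all of the technical content already lives in \Cref{theorem_layerwise_discrepancy_cascade}; what remains is bookkeeping of the accumulated constants.

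First I would record the per-layer estimate. For each index $i$ with $l < i \le L$, \Cref{theorem_layerwise_discrepancy_cascade} gives
\[
\|\mH_1^{(i)} - \mH_2^{(i)}\|_2 \le L_\sigma \|\mW^{(i)}\|_2 \, \|\mH_1^{(i-1)} - \mH_2^{(i-1)}\|_2,
\]
where the constant $L_\sigma \|\mW^{(i)}\|_2$ is independent of the specific inputs because $L_\sigma$ is a global Lipschitz constant (\Cref{lemma_lipschitz continuity}) and $\|\mW^{(i)}\|_2$ is the operator norm of the fixed weight matrix of layer $i$.

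Next I would run the induction on $i$, starting from $i = l+1$. The base case $i = l+1$ is exactly the displayed per-layer estimate. For the inductive step, assuming
\[
\|\mH_1^{(i-1)} - \mH_2^{(i-1)}\|_2 \le L_\sigma^{(i-1-l)} \Big(\prod_{j=l+1}^{i-1} \|\mW^{(j)}\|_2\Big) \|\mH_1^{(l)} - \mH_2^{(l)}\|_2,
\]
I would substitute this into the per-layer estimate at layer $i$, which multiplies the accumulated constant by one further factor $L_\sigma \|\mW^{(i)}\|_2$ and increments the exponent of $L_\sigma$ by one. Setting $i = L$ then yields the claimed constant $C = L_\sigma^{(L-l)} \prod_{i=l+1}^{L} \|\mW^{(i)}\|_2$.

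The only point requiring care --- and the closest thing to an obstacle --- is ensuring that each inequality is applied with a constant that does \emph{not} depend on the intermediate representations $\mH_1^{(i-1)}, \mH_2^{(i-1)}$; otherwise the per-layer constants could not simply be multiplied together. This uniformity is exactly what the definition of the discrepancy bound provides, combined with the fact that the admissible activations have global (input-independent) Lipschitz constants. Granting this, the telescoping multiplication is valid and the result follows directly.
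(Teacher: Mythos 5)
Your proposal is correct and matches the paper's own argument: the appendix proof is precisely the telescoping chain of the single-layer bound from \Cref{theorem_layerwise_discrepancy_cascade}, applied from layer $L$ down to layer $l$ with the constants multiplied together. Your formal induction is just a more carefully stated version of the same recursion, and your remark about the constants being input-independent is the right (and only) point of care.
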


\Cref{corollary_discrepancy_cascade} implies that reducing discrepancies in intermediate representations can minimize discrepancies in the final output, allowing parameters in each layer to effectively contribute to the model's robustness.
On the other hand, the linear discrepancy bound does not hold for \acp{GNN}.
We formalize this observation in \Cref{theorem_discrepancy_unbounded_gnn}.

\begin{lemma}
\label{lemma_lipschitz continuity2}
Commonly used aggregation functions in GNNs—regular, random walk normalized, and symmetric normalized—satisfy Lipschitz continuity.
\end{lemma}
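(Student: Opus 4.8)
The plan is to recognize that each of the three aggregation schemes acts on the input representation as left-multiplication by a fixed matrix determined solely by the (fixed) graph structure, and that any such linear map is automatically Lipschitz continuous with constant equal to its spectral norm. Concretely, I would write the regular, random-walk-normalized, and symmetric-normalized aggregations respectively as $\hat{\mA} = \mA$, $\hat{\mA} = \mD^{-1}\mA$, and $\hat{\mA} = \mD^{-1/2}\mA\mD^{-1/2}$, where $\mD$ is the diagonal degree matrix, so that in every case $\mathrm{AGG}^{(l)}(\mH, \mA) = \hat{\mA}\mH$ as a function of the node representations $\mH$. Then for any two inputs $\mH_1, \mH_2$ we have $\|\hat{\mA}\mH_1 - \hat{\mA}\mH_2\|_2 = \|\hat{\mA}(\mH_1 - \mH_2)\|_2 \le \|\hat{\mA}\|_2\,\|\mH_1 - \mH_2\|_2$, so it suffices to prove that $\|\hat{\mA}\|_2$ is a finite constant in each of the three cases; this constant is then the Lipschitz constant.

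The bulk of the remaining work is bounding the spectral norm of each aggregation matrix. The symmetric-normalized case is the cleanest: $\mD^{-1/2}\mA\mD^{-1/2}$ is real symmetric with eigenvalues known to lie in $[-1,1]$, so $\|\hat{\mA}\|_2 \le 1$ immediately. For the regular case, $\mA$ is a fixed finite symmetric $0/1$ matrix, hence its spectral norm is finite; an explicit bound follows from the interpolation inequality $\|\mA\|_2 \le \sqrt{\|\mA\|_1\,\|\mA\|_\infty} = d_{\max}$, the maximum degree, since the induced $1$- and $\infty$-norms of a symmetric adjacency matrix both equal the maximum row/column sum.

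I expect the main obstacle to be the random-walk-normalized case, because $\mD^{-1}\mA$ is not symmetric and so its spectral norm is not simply its spectral radius. I would handle it in one of two ways. The first uses the same interpolation bound $\|\mD^{-1}\mA\|_2 \le \sqrt{\|\mD^{-1}\mA\|_1\,\|\mD^{-1}\mA\|_\infty}$, noting that $\mD^{-1}\mA$ is row-stochastic so $\|\mD^{-1}\mA\|_\infty = 1$, and that each column sum is at most $d_{\max}$, giving $\|\mD^{-1}\mA\|_2 \le \sqrt{d_{\max}}$. The second exploits the similarity $\mD^{-1}\mA = \mD^{-1/2}\bigl(\mD^{-1/2}\mA\mD^{-1/2}\bigr)\mD^{1/2}$ together with submultiplicativity, reducing to the symmetric case at the cost of the factor $\|\mD^{-1/2}\|_2\,\|\mD^{1/2}\|_2 \le \sqrt{d_{\max}/d_{\min}}$. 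In all three cases the resulting bound is a finite constant depending only on the fixed graph, which is exactly the finite Lipschitz constant required, and I would conclude by combining this with \Cref{lemma_lipschitz continuity} to confirm that the aggregation step fits into the cascaded discrepancy-bound framework.
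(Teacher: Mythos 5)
Your proposal is correct and rests on the same core idea as the paper's proof---each aggregation scheme is a fixed linear operator on the representations, hence Lipschitz with constant given by its spectral norm (times $\|\mW^{(l)}\|_2$ once the weight matrix, which the paper folds into $\mathrm{AGG}$, is accounted for via \Cref{lemma_lipschitz continuity})---but it differs in two substantive ways. First, the paper does not fix the adjacency matrix: for each scheme it bounds $\|\mathrm{AGG}^{(l)}(\mH_1,\mA_1)-\mathrm{AGG}^{(l)}(\mH_2,\mA_2)\|_2$ by inserting the cross term $\hat{\mA}_1\mH_2$, arriving at a two-term bound $\|\hat{\mA}_1\|_2\|\mW^{(l)}\|_2\|\mH_1-\mH_2\|_2+|V|\,\|\mW^{(l)}\|_2\|\hat{\mA}_1-\hat{\mA}_2\|_2$; Lipschitz continuity is read off from the special case $\mA_1=\mA_2$, while the additive term is what is actually reused in \Cref{corollary_discrepancy_cascade_gnn}. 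Your fixed-$\mA$ argument proves the lemma as stated more directly, but does not by itself supply that decomposition. Second, your handling of the random-walk case is more careful than the paper's: the paper asserts $\|\mD^{-1}\mA\|_2=1$ on the grounds that a row-normalized matrix has largest eigenvalue $1$, which conflates spectral radius with spectral norm for a non-symmetric matrix---on a star graph one has $\|\mD^{-1}\mA\|_2=\sqrt{d_{\max}}$, so the paper's constant is not correct in general. Your interpolation bound $\|\mD^{-1}\mA\|_2\le\sqrt{\|\mD^{-1}\mA\|_1\,\|\mD^{-1}\mA\|_\infty}\le\sqrt{d_{\max}}$ (or the similarity-transform route) is the right fix; it still yields a finite, input-independent Lipschitz constant, which is all the lemma needs, though it would change the numerical constants propagated into \Cref{corollary_discrepancy_cascade_gnn} for the row-normalized scheme.
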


\begin{theorem}
\label{theorem_discrepancy_unbounded_gnn}
Given a graph convolutional network (GCN) with any non-linear activation function $\sigma$ and different adjacency matrices \(\mA_1\) and \(\mA_2\), the discrepancy bound cannot be established as a constant $C$ independent of the input.
\end{theorem}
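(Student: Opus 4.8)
The plan is to exploit the fundamental difference between the MLP setting of \Cref{theorem_layerwise_discrepancy_cascade} and the GNN setting: an MLP layer applies a single fixed map to both inputs, so equal inputs force equal outputs, whereas a GCN layer driven by two distinct adjacency matrices $\mA_1$ and $\mA_2$ realizes two genuinely different maps. I would therefore aim to produce a single configuration in which the input discrepancy vanishes while the output discrepancy stays bounded away from zero; this alone rules out any finite $C$, since the defining inequality would then demand $\|\mH_1^{(l)} - \mH_2^{(l)}\|_2 \le C \cdot 0 = 0$, a contradiction.

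First I would write out the GCN layer explicitly as $\mH_j^{(l)} = \sigma(\hat{\mA}_j \mH_j^{(l-1)} \mW^{(l)})$ for $j \in \{1,2\}$, where $\hat{\mA}_j$ is the symmetrically normalized, self-looped version of $\mA_j$ (one of the Lipschitz aggregation operators of \Cref{lemma_lipschitz continuity2}). Next I would set the two inputs equal, $\mH_1^{(l-1)} = \mH_2^{(l-1)} = \mH$, so that $\|\mH_1^{(l-1)} - \mH_2^{(l-1)}\|_2 = 0$, and compute the output gap $\mH_1^{(l)} - \mH_2^{(l)} = \sigma(\hat{\mA}_1 \mH \mW^{(l)}) - \sigma(\hat{\mA}_2 \mH \mW^{(l)})$. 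The whole argument then reduces to showing this gap can be strictly positive.

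The crux is to verify that claim with a concrete witness. I would take a minimal graph—e.g.\ two nodes joined by a single edge that DropEdge removes—so that $\mA_1 \neq \mA_2$ and hence $\hat{\mA}_1 \neq \hat{\mA}_2$, and then choose $\mH$ and $\mW^{(l)}$ so that the two pre-activations $\hat{\mA}_1 \mH \mW^{(l)}$ and $\hat{\mA}_2 \mH \mW^{(l)}$ differ while both land in a region where $\sigma$ is injective (for ReLU, the strictly positive branch, where $\sigma$ acts as the identity). Because $(\hat{\mA}_1 - \hat{\mA}_2)\mH\mW^{(l)} \neq 0$ then survives the nonlinearity, we obtain $\|\mH_1^{(l)} - \mH_2^{(l)}\|_2 > 0$. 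Paired with the zero input discrepancy, this contradicts the existence of any input-independent constant $C$, completing the proof.

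I expect the main obstacle to be precisely this middle step: arguing cleanly that $\mA_1 \neq \mA_2$ forces a surviving output difference for \emph{some} admissible pair $(\mH, \mW^{(l)})$. One must rule out accidental cancellations both in the normalization (so that $\hat{\mA}_1 \neq \hat{\mA}_2$) and through the activation (so that $\sigma$ does not collapse the difference). Pinning this down with an explicit minimal example, rather than a genericity argument, is the cleanest route to rigor, and it also makes transparent why the obstruction is \emph{structural} to message passing rather than an artifact of any particular choice of $\sigma$.
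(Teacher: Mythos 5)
Your overall strategy coincides with the paper's: assume a finite discrepancy bound $C$ exists, feed the layer identical inputs $\mH_1^{(l-1)} = \mH_2^{(l-1)} = \mH$ so that the right-hand side of the defining inequality vanishes, and then exhibit a choice of $\mH$ (and $\mW^{(l)}$) for which $\sigma(\hat{\mA}_1\mH\mW^{(l)}) \neq \sigma(\hat{\mA}_2\mH\mW^{(l)})$, contradicting the forced equality of outputs. Where you diverge is exactly at the step you yourself flag as the crux: showing that the difference $\hat{\mA}_1 \neq \hat{\mA}_2$ survives the element-wise nonlinearity. You propose a concrete two-node witness together with an activation that is injective on some accessible region (ReLU on its positive branch), which is a perfectly rigorous argument for ReLU, sigmoid, or GELU, but it does not literally deliver the theorem as stated for \emph{any} non-linear activation. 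The paper instead isolates this step as a standalone result (\Cref{lemma_property_nonlinear_function}): for distinct nonnegative matrices and any non-constant, continuous element-wise $\sigma$, some input $\mZ$ satisfies $\sigma(\mA\mZ) \neq \sigma(\mB\mZ)$. Its proof reduces the claim to the scalar identity $\sigma(az) = \sigma(bz)$ holding for all $z$, and shows this forces $\sigma$ to be constant---immediately if one of $a,b$ is zero, and otherwise by iterating $\sigma(z) = \sigma((b/a)^n z)$ and passing to the limit using continuity. To match the full generality of the statement you would need to replace your injectivity assumption with an argument of this type; as written, your proof is a correct but more restrictive instance of the same idea, and its advantage is concreteness rather than extra reach. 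One small technicality both arguments share and neither fully addresses: one must also ensure that $\mH\mW^{(l)}$ can realize the separating input, i.e., that $\mW^{(l)}$ does not annihilate the required directions.
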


\begin{theorem}
\label{corollary_discrepancy_cascade_gnn}
Under the same conditions as \Cref{theorem_discrepancy_unbounded_gnn}, the discrepancy of a GCN at layer $l$ is bounded as
\begin{equation*}
\begin{split}
\|\mH_1^{(l)} - \mH_2^{(l)}\|_2 \leq C_1 \| \mH_1^{(l-1)}-\mH_2^{(l-1)} \|_2 + C_2 ,
\end{split}
\end{equation*}
where $C_1 = L_\sigma \|\mW^{(l)}\|_2$, $C_2 = C_1|V|\|\hat{\mA}_1 - \hat{\mA}_2\|_2$,
and \(\hat{\mA}\) is the normalized adjacency matrices of \(\mA\).
\end{theorem}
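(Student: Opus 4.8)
The plan is to expand a single GCN layer as $\mH^{(l)} = \sigma(\hat{\mA}\mH^{(l-1)}\mW^{(l)})$ for each of the two inputs and then peel off the operations one at a time, tracking how each contributes either to the coefficient $C_1$ or to the additive constant $C_2$. First I would invoke the Lipschitz continuity of the activation (\Cref{lemma_lipschitz continuity}) to write $\|\mH_1^{(l)} - \mH_2^{(l)}\|_2 \le L_\sigma \|\hat{\mA}_1\mH_1^{(l-1)}\mW^{(l)} - \hat{\mA}_2\mH_2^{(l-1)}\mW^{(l)}\|_2$, and then use submultiplicativity of the spectral norm to factor $\|\mW^{(l)}\|_2$ out on the right. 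At this point the product $L_\sigma\|\mW^{(l)}\|_2 = C_1$ has already emerged, and what remains is to control $\|\hat{\mA}_1\mH_1^{(l-1)} - \hat{\mA}_2\mH_2^{(l-1)}\|_2$.

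The key step is an add-subtract (telescoping) decomposition of the aggregated difference,
\[
\hat{\mA}_1\mH_1^{(l-1)} - \hat{\mA}_2\mH_2^{(l-1)} = \hat{\mA}_1\bigl(\mH_1^{(l-1)} - \mH_2^{(l-1)}\bigr) + \bigl(\hat{\mA}_1 - \hat{\mA}_2\bigr)\mH_2^{(l-1)},
\]
which isolates the part driven by the change in representations from the part driven by the change in the adjacency matrix. Applying the triangle inequality and submultiplicativity splits the bound into two pieces. For the first piece I would use $\|\hat{\mA}_1\|_2 \le 1$, which holds for the normalized aggregation operators covered by \Cref{lemma_lipschitz continuity2}; this produces the $C_1\|\mH_1^{(l-1)} - \mH_2^{(l-1)}\|_2$ term and explains why no extra aggregation-norm factor appears in $C_1$. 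For the second piece I would bound $\|\hat{\mA}_1 - \hat{\mA}_2\|_2$ as is and control $\|\mH_2^{(l-1)}\|_2$ by the number of nodes, yielding the constant $C_2 = C_1|V|\|\hat{\mA}_1 - \hat{\mA}_2\|_2$.

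The main obstacle—and the conceptually important point—is the second piece, which does not scale with the input discrepancy $\|\mH_1^{(l-1)} - \mH_2^{(l-1)}\|_2$ and therefore cannot be absorbed into a purely multiplicative constant as in the MLP cascade of \Cref{corollary_discrepancy_cascade}. Making this rigorous requires a bound on $\|\mH_2^{(l-1)}\|_2$; I would obtain the stated $|V|$ factor by converting the spectral norm to an entrywise one (e.g.\ $\|\mH_2^{(l-1)}\|_2 \le |V|\,\|\mH_2^{(l-1)}\|_{\max}$ under bounded representations), so the cleanest version of the argument needs an explicit boundedness assumption on the node embeddings and care about which matrix-norm inequality is used. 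This residual term is exactly what prevents the robustness term in \loss from being driven to zero, and it is the precise mechanism that separates GNNs from MLPs in the preceding results.
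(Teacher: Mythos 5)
Your proposal is correct and follows essentially the same route as the paper: Lipschitz continuity of $\sigma$, submultiplicativity to extract $\|\mW^{(l)}\|_2$, the add--subtract decomposition $\hat{\mA}_1\mH_1^{(l-1)} - \hat{\mA}_2\mH_2^{(l-1)} = \hat{\mA}_1(\mH_1^{(l-1)} - \mH_2^{(l-1)}) + (\hat{\mA}_1 - \hat{\mA}_2)\mH_2^{(l-1)}$, the bound $\|\hat{\mA}_1\|_2 \le 1$ for normalized aggregation, and a boundedness assumption giving $\|\mH_2^{(l-1)}\|_2 \le |V|$ (which the paper likewise imposes as an explicit normalization assumption in its proof of \Cref{lemma_lipschitz continuity2}). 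Your remark that the additive term is what breaks the purely multiplicative cascade matches the paper's own interpretation.
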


The key difference between \acp{GNN} and \acp{MLP} arises from the \(\agg\) operation in GNN layers. 
While the inclusion of the \(\agg\) operation enables a GNN to utilize the graph structure, it becomes problematic when aiming for robustness under different adjacency matrices.
As demonstrated in \Cref{corollary_discrepancy_cascade_gnn}, discrepancies can arise purely due to differences in the adjacency matrices, as a form of $C_2$, even if the pre-aggregation representations are identical.
This issue ultimately hinders the optimization of the robustness term in \loss, as observed in \Cref{Limitations_of_DropEdge}.


\section{Achieving Edge-Robustness}
\label{sec:method}
Our analysis in \Cref{sec:dropedge} shows the difficulty of optimizing \loss due to the nature of \acp{GNN}.
As a solution, we propose \modulelong (\module), a new parameter block which can be integrated into a \ac{GNN}'s backbone as illustrated in \Cref{illust-module}. 
\module is specifically designed to refine the output of the \(\agg\) operation, mitigating discrepancies caused by variations in the graph structure introduced by DropEdge.

\begin{figure}[t]
\vskip 0.1in 
\begin{center}
\centerline{\includegraphics[width=\columnwidth]{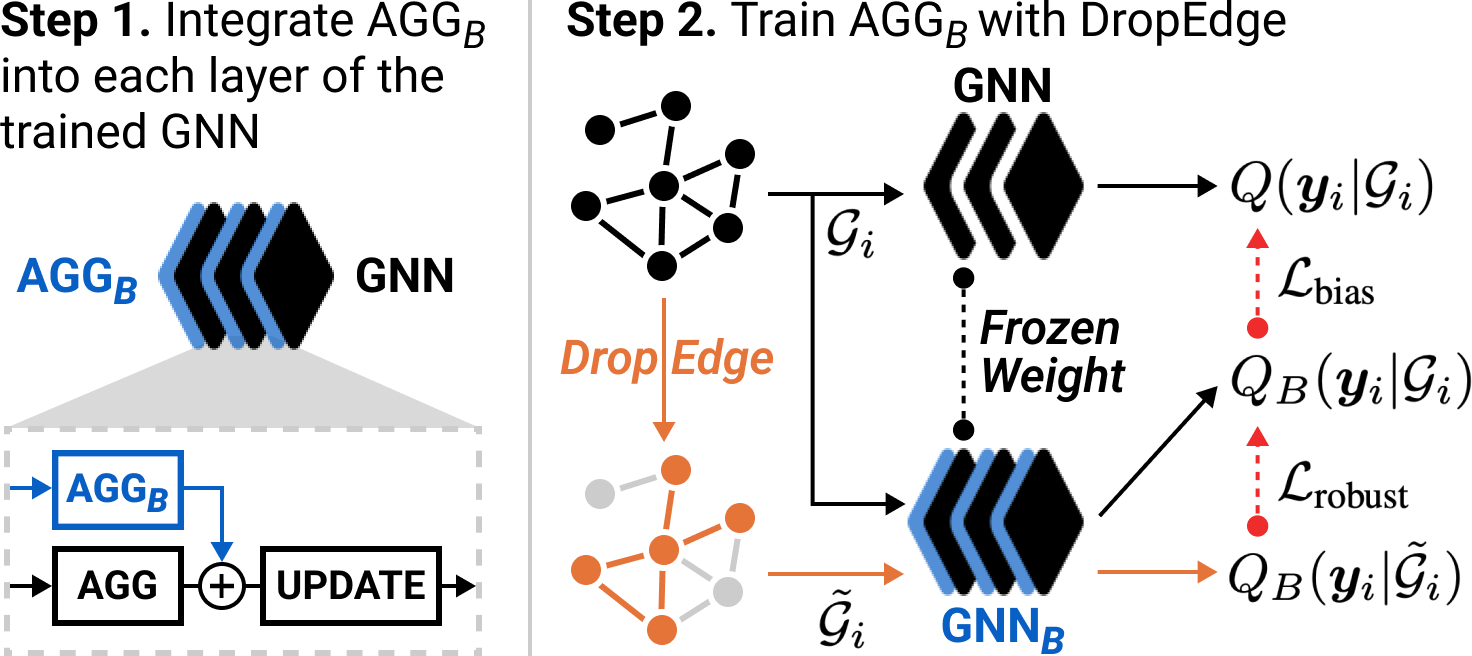}}
\caption{Illustration of \module and its training scheme. After the integration into a pre-trained GNN, \module is trained using \(\gL_{\text{RC}}\) with DropEdge, while the pre-trained parameters remain frozen.}
\label{illust-subset-subgraph}
\label{illust-module}
\end{center}
\vskip -0.2in 
\end{figure}

\subsection{Aggregation Buffer: A New Parameter Block}

Unlike the standard training strategy, where an augmentation function is used during training, we propose a two-step approach; given a GNN trained without DropEdge, we integrate \module into each GNN layer and train \module with DropEdge while freezing the pre-trained parameters.
This two-step procedure provides several advantages:
\begin{enumerate}[noitemsep, nolistsep]
    \item \textbf{Practical Usability.} 
    Our approach can be applied to any trained GNN.
    Separate training of \module enables modular application even to already deployed models.

    \item \textbf{Effectiveness.} Pre-training without DropEdge avoids the suboptimal minimization of the bias term observed in \Cref{Limitations_of_DropEdge}.
    As a result, \module can focus entirely on optimizing the robustness term.

    \item \textbf{No Knowledge Loss.} Freezing the pre-trained parameters prevents any unintended loss of knowledge during the training of \module.
    The integration of \module can even be detached to get the original model back.

\end{enumerate}
The main idea of our approach is to assign distinct roles to different sets of parameters: the pre-trained parameters focus on solving the primary classification task, while \module is dedicated to mitigate representation changes caused by inconsistent graph structures.
Given \module, while its details will be discussed later, we modify a GNN layer as
\begin{equation*}
\begin{split}
    \mH_\mathcal{N}^{(l)}= \mathrm{AGG}^{(l)}(\mH^{(l-1)}, \mA) + \mathrm{AGG}_B^{(l)}(\mH^{(0:l-1)}, \mA),
\end{split}
\label{general_form}
\end{equation*}
where \module can leverage all available resources until the current layer $l$, including the adjacency matrix \(\mA\) and the preceding representations \(\mH^{(0:l-1)}\).
We henceforth refer to the \ac{GNN} model augmented with \module as $\text{GNN}_B$.

\subsection{Essential Conditions for \module}
\label{essential_conditions}

The important part of our approach is to decide the actual function of \module.
Existing methods for enhancing GNN layers, such as residual connections \cite{resnet, gcn2} and JKNet \cite{jknet}, are not considered as \module since they fail to satisfy the essential conditions that \module must meet to achieve its purpose.
To derive our own approach that is better than existing methods, we first introduce the two essential conditions for \module.


\fbox{\parbox{0.468\textwidth}{
{\bf C1: Edge-Awareness.}
    When the adjacency matrix \(\mA\) is perturbed to \(\tilde{\mA}\), \module should produce distinct outputs to compensate for structural changes:
    \vspace{-0.1in}
    \[
    \mathrm{AGG}_B^{(l)}(\mA) \neq \mathrm{AGG}_B^{(l)}(\tilde{\mA}).
    \vspace{-0.1in}
    \]
    
}}


This condition ensures that \module adapts to structural variations by modifying its output accordingly.
Existing layers that depend only on node representations, such as residual connections and JKNet, fail to meet this condition as they produce identical outputs regardless of structural perturbations when the input representations remain the same.

\fbox{\parbox{0.468\textwidth}{
    {\bf C2: Stability.} For any perturbed adjacency matrix \(\tilde{\mA} \subset \mA\) created by random edge dropping, \module should produce outputs with a smaller deviation from the original output when given \(\mA\), compared to when given \(\tilde{\mA}\):
    \vspace{-0.1in}
    \[
    \|\mathrm{AGG}_B^{(l)}(\mA)\|_\mathrm{F} < \|\mathrm{AGG}_B^{(l)}(\tilde{\mA})\|_\mathrm{F}.
    \vspace{-0.1in}
    \]
}}


This condition ensures the knowledge learned by the original GNN to be preserved, contained in the frozen pre-trained parameters, by minimizing unnecessary changes under the original graph structure. 
At the same time, it provides sufficient flexibility to adapt and correct for structural perturbations, thereby optimizing edge-robustness without compromising the integrity of the original representations.

\mypar{Our Solution}
We propose a simple structure-aware form of \module which satisfies both conditions above: 
\begin{equation*}
g_B(\mH^{(0:l-1)}, \mA) = (\mD+\mI)^{-1} \mH^{(0:l-1)} \mW^{(l)},
\end{equation*}
where $\mD$ is the degree matrix of adjacency matrix $\mA$.
Since it is \emph{degree-normalized linear transformation}, its computation is faster than the regular \agg  operation. When computed in parallel, integrating \module does not increase inference time, ensuring efficient execution.

\begin{theorem}
    $g_B$ satisfies the conditions C1 and C2.
    \label{advanced form}
\end{theorem}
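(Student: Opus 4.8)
The plan is to exploit the fact that $g_B$ depends on the adjacency matrix only through the degree matrix $\mD$, since the factor $(\mD+\mI)^{-1}$ is diagonal with $i$-th entry $1/(d_i+1)$, where $d_i$ is the degree of node $i$. Writing $\mB = \mH^{(0:l-1)}\mW^{(l)}$ for the adjacency-independent factor and letting $\vb_i$ denote its $i$-th row, both conditions then reduce to comparing diagonal rescalings of the fixed rows $\vb_i$. The elementary fact I would use throughout is that random edge dropping gives $\tilde{\mA}\subseteq\mA$, hence $\tilde{d}_i \le d_i$ for every node $i$, with $\tilde{d}_j < d_j$ for at least one node $j$ incident to a dropped edge.

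For C1 I would argue that $(\mD+\mI)^{-1}$ and $(\tilde{\mD}+\mI)^{-1}$ are diagonal with entries $1/(d_i+1)$ and $1/(\tilde{d}_i+1)$ respectively, so because $\tilde{d}_j < d_j$ for some node $j$, these diagonal matrices differ in their $j$-th entry. Their difference scales the row $\vb_j$ by the scalar $1/(\tilde{d}_j+1) - 1/(d_j+1) > 0$, so the $j$-th row of $g_B(\mA) - g_B(\tilde{\mA})$ is nonzero whenever $\vb_j \neq \vzero$, establishing $g_B(\mA) \neq g_B(\tilde{\mA})$.

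For C2 I would expand the squared Frobenius norm as a weighted sum over rows:
\[
\|g_B(\mA)\|_\mathrm{F}^2 = \sum_i \frac{\|\vb_i\|_2^2}{(d_i+1)^2}, \qquad \|g_B(\tilde{\mA})\|_\mathrm{F}^2 = \sum_i \frac{\|\vb_i\|_2^2}{(\tilde{d}_i+1)^2}.
\]
Since $\tilde{d}_i \le d_i$ gives $1/(\tilde{d}_i+1)^2 \ge 1/(d_i+1)^2$ termwise, the second sum dominates the first term by term, and the domination is strict at the node $j$ where $\tilde{d}_j < d_j$. Taking square roots yields $\|g_B(\mA)\|_\mathrm{F} < \|g_B(\tilde{\mA})\|_\mathrm{F}$, which is exactly C2.

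The only real obstacle is the degenerate case in which every row $\vb_j$ associated with a degree-reduced node vanishes; then both the difference in C1 and the strict gap in C2 collapse to equality. I expect the cleanest fix is to state a mild nondegeneracy assumption---that at least one node incident to a dropped edge has a nonzero row in $\mH^{(0:l-1)}\mW^{(l)}$---which holds generically for trained representations, and under which both strict inequalities follow immediately from the termwise comparisons above.
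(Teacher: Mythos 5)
Your proposal is correct and follows essentially the same route as the paper: both expand $\|(\mD+\mI)^{-1}\mM\|_\mathrm{F}^2$ as a row-wise weighted sum, use $\tilde{d}_i\le d_i$ to get termwise domination, and observe that strictness can only fail when every degree-reduced node has a zero row in $\mM$ (the paper dismisses this as a measure-zero event, you phrase it as a mild nondegeneracy assumption). The only organizational difference is that the paper derives C1 as an automatic consequence of C2, whereas you verify it directly by exhibiting a nonzero row of the difference.
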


\begin{proof}
\vspace{-3mm}
    The proof is in \Cref{Appendix Proof for Theorem 4.1}.
\vspace{-2mm}
\end{proof}

\subsection{Objective Function for Training \module}
\label{train with loss}

We train the \module to minimize an objective function, $\gL_{\text{RC}}$, referred to as the \emph{robustness-controlled loss}, which has a few adjustments from \smash{\loss.}
First, we introduce a hyperparameter $\lambda$ to explicitly balance the strength between the bias term $\gL_{\text{bias}}$ and the robustness term $\gL_{\text{robust}}$:
\begin{equation}
\label{RC_loss_formula}
    \gL_{\text{RC}}(\theta_B) = \gL_{\text{bias}}(\theta_B) + \lambda \cdot \gL_{\text{robust}}(\theta_B),
\end{equation}
where $\theta_B$ refers to the set of parameters in \module.

Then, we reformulate the bias term by replacing $P$ with $Q$.
Since our method involves two-stage training, we can safely assume that the modeled distribution $Q$ of the pre-trained GNN is a good approximation of the true distribution $P$ at least within the training data.
As a result, the bias term can simulate knowledge distillation in training data:
\begin{equation*}
\begin{split}
&\gL_{\text{bias}}(\theta_B) = {\textstyle \frac{1}{|V_{\text{trn}}|} \sum_{i \in V_{\text{trn}}}} \KL 
(Q(\vy_i |\gG_i) \Vert Q_B(\vy_i|\gG_i)),\\
&\gL_{\text{robust}}(\theta_B) = {\textstyle \frac{1}{|V|} \sum_{i\in V}} \KL 
(Q_B(\vy_i |\gG_i) \Vert Q_B(\vy_i|\tilde{\gG}_i)),
\end{split}
\label{loss_function}
\end{equation*}
where $V_{\text{trn}}$ refers to the set of (labeled) training nodes, and $Q_B$ represents the modeled distribution of the \ac{GNN} enhanced with \module, which we refer to as $\text{GNN}_B$.

Unlike the bias term, the robustness term does not require access to the true distribution. 
This independence enables its application to all nodes, including unlabeled nodes, promoting comprehensive edge-robustness for the graph.
On the other hand, it may not be effective to apply the bias term to all nodes as well, since it relies on an assumption that the pre-trained model distribution $Q$ approximates $P$ also in the unlabeled nodes, which is hardly true in practice.



\section{Related Works}

\mypar{Random Dropping Methods for \acp{GNN}}
Several random-dropping techniques were proposed for \acp{GNN} to improve their robustness, complementing the widely-used DropOut \cite{dropout} method used in classical machine learning \cite{graphCL, cca-ssg, maskGAE, dropmessage}.
DropEdge \cite{dropedge} removes a random subset of edges, while DropNode \cite{dropnode} removes nodes along with their connected edges.
Existing graph sampling methods can also be seen as variants of these approaches.
DropMessage \cite{dropmessage} integrates DropNode, DropEdge, and DropOut by dropping propagated messages during the message-passing phase, offering higher information diversity.
While these methods aim to reduce overfitting on edges in supervised learning, their performance improvements have been modest.

\mypar{Sub-optimalities of \acp{GNN}}
Incorporating edge information for its prediction is the core idea of \acp{GNN}.
However, it also makes \acp{GNN} vulnerable to structural inconsistencies in the graph, making it suffer from well-known problems like degree bias and structural disparity.
\emph{Degree bias} refers to the tendency of performing significantly better on high-degree (head) nodes than on low-degree (tail) nodes \cite{investigatedegreebias, degfairgnn}.
Tail-GNN \cite{tailGNN} transfers representation-translation from head to tail nodes, while Coldbrew \cite{coldbrew} uses existing nodes as virtual neighbors for tail nodes. 
While both approaches improve the performance on tail nodes, they degrade the performance on head nodes and rely on manual degree thresholds.
TUNEUP \cite{tuneup} fine-tunes \acp{GNN} with pseudo-labels and DropEdge, differing from our method by not freezing pre-trained parameters, lacking \module, and using a different loss function.
GraphPatcher \cite{graphpatcher} attaches virtual nodes to enhance the representations of tail nodes.
\emph{Structural disparity} arises when neighboring nodes have conflicting properties, such as heterophilous nodes in homophilous graphs.
Recent studies \cite{understandingheterophily, beyondhomophily, structuraldisparity} show that \acp{MLP} outperform \acp{GNN} in such scenarios, implying that avoiding edge-reliance is often more beneficial.
Our work addresses both issues holistically, improving \ac{GNN} generalization by enhancing edge-robustness through the idea of \module.

\begin{table*}[h]
\centering
\caption{
    Accuracy (\%) of all models for test nodes grouped by degree.
    Head nodes refer to the top 33\% of nodes by degree, while tail nodes refer to the bottom 33\%.
    \textbf{Bold} values indicate the best performance, and \underline{underlined} values indicate the second-best performance. Standard deviations are shown as subscripts.
    Our GCN$_B$ achieves at least the second-best in \modified{31} out of \modified{36} settings.
}
\vskip 0.2in
\resizebox{\textwidth}{!}{
\begin{tabular}{l|cccccccccccc}
\toprule 
\multicolumn{1}{l|}{Method} & \texttt{Cora} & \texttt{Citeseer} & \texttt{PubMed} & \texttt{Wiki-CS} & \texttt{A.Photo} & \texttt{A.Computer} & \texttt{CS} & \texttt{Physics} & \texttt{Arxiv} & \texttt{Actor} & \texttt{Squirrel} & \texttt{Chameleon}\\ 
 \midrule
\rowcolor{black!10}\multicolumn{13}{c}{\textsc{Overall Performance}}\\
\midrule
\text{MLP}
& $64.86_{\pm1.21}$ & $65.55_{\pm0.76}$ & $84.62_{\pm0.28}$ & $75.98_{\pm0.51}$ & $85.97_{\pm0.81}$ & $80.81_{\pm0.40}$& $\mathbf{93.55_{\pm0.18}}$& $95.09_{\pm0.12}$ & $56.41_{\pm0.14}$& $\mathbf{34.86_{\pm0.97}}$& \modified{$32.55_{\pm1.51}$} & \modified{$32.10_{\pm3.10}$}\\ 
\text{GCN}
& $83.44_{\pm1.44}$ & $72.45_{\pm0.80}$ & $86.48_{\pm0.17}$& $80.26_{\pm0.34}$& $92.21_{\pm1.36}$ & $88.24_{\pm0.63}$& $91.85_{\pm0.29}$& $95.18_{\pm0.17}$ & $71.80_{\pm0.10}$& $30.16_{\pm0.73}$ & \modified{$41.67_{\pm2.42}$} & \modified{$40.19_{\pm4.29}$} \\ 
\text{DropEdge}
& $83.27_{\pm1.55}$ & $72.29_{\pm0.60}$ & $86.47_{\pm0.21}$& $80.22_{\pm0.55}$ & $92.14_{\pm1.42}$ & $88.08_{\pm1.08}$& $91.91_{\pm0.16}$ & $95.13_{\pm0.16}$ & $71.73_{\pm0.21}$& $29.86_{\pm0.82}$ & \modified{$38.40_{\pm2.57}$} & \modified{\underline{$40.51_{\pm3.38}$}}\\ 
\text{DropNode}
& $\underline{83.65_{\pm1.83}}$ & $72.20_{\pm0.67}$ & $86.55_{\pm0.18}$& $80.11_{\pm0.61}$ & $91.89_{\pm1.21}$ & $88.17_{\pm0.40}$& $91.93_{\pm0.28}$ & $95.11_{\pm0.16}$ & $71.72_{\pm0.16}$& $29.07_{\pm0.93}$ & \modified{$38.01_{\pm2.00}$} & \modified{$39.74_{\pm2.79}$}\\ 
\text{DropMessage}
& $83.45_{\pm1.56}$ & $72.44_{\pm0.76}$ & $\underline{86.56_{\pm0.16}}$ & $80.30_{\pm0.37}$ & $92.13_{\pm1.56}$ & $\underline{88.52_{\pm0.44}}$ & $92.08_{\pm0.21}$ & $95.14_{\pm0.18}$ & $71.93_{\pm0.20}$ & $29.62_{\pm1.05}$ & \modified{$38.75_{\pm3.34}$} & \modified{$40.48_{\pm3.07}$}\\ 
\text{TUNEUP}
& $83.59_{\pm1.26}$ & $\underline{73.00_{\pm0.78}}$ & $86.43_{\pm0.36}$ & $80.56_{\pm0.47}$ & $92.11_{\pm1.37}$ & $88.14_{\pm0.95}$ & $90.89_{\pm0.45}$ & $94.51_{\pm0.25}$ & $71.81_{\pm0.15}$ & $28.95_{\pm1.48}$ & \modified{$41.49_{\pm2.65}$} & \modified{$40.24_{\pm4.24}$} \\ 
\text{GraphPatcher}
& $83.57_{\pm1.38}$ & $72.22_{\pm0.73}$ & $86.21_{\pm0.23}$ & $\underline{80.64_{\pm0.51}}$ & $\mathbf{92.89_{\pm0.57}}$ & $88.49_{\pm0.71}$ & $91.74_{\pm0.25}$ & $\underline{95.25_{\pm0.24}}$ & $\underline{72.06_{\pm0.06}}$ & $28.07_{\pm0.67}$ & \modified{\underline{$41.89_{\pm2.49}$}} & \modified{$40.35_{\pm4.11}$}\\ 
\textsc{$\text{GCN}_B$}\text{(Ours)}
& $\mathbf{84.84_{\pm1.39}}$ & $\mathbf{73.32_{\pm0.85}}$ & $\mathbf{87.56_{\pm0.27}}$& $\mathbf{80.75_{\pm0.42}}$& $\underline{92.44_{\pm1.42}}$ & $\mathbf{88.76_{\pm0.65}}$& $\underline{93.54_{\pm0.37}}$& $\mathbf{95.79_{\pm0.17}}$ & $\mathbf{72.43_{\pm0.16}}$& $\underline{30.56_{\pm0.84}}$& \modified{$\mathbf{42.39_{\pm2.19}}$} & \modified{$\mathbf{40.96_{\pm4.83}}$} \\
  \midrule
  \multicolumn{13}{c}{\textsc{Accuracy on Head Nodes (High-degree)}}\\
\midrule
\text{MLP}
& $65.86_{\pm1.56}$ & $70.99_{\pm1.33}$ & $84.70_{\pm0.32}$ & $80.06_{\pm0.83}$ & $88.58_{\pm1.12}$ & $86.09_{\pm0.68}$ & $\mathbf{94.08_{\pm0.24}}$ & $97.50_{\pm0.14}$ & $63.93_{\pm0.17}$ & $\mathbf{34.27_{\pm1.42}}$ & \modified{$25.80_{\pm3.72}$}  & \modified{$29.74_{\pm3.68}$}\\ 
\text{GCN}
& $84.70_{\pm1.60}$ & $79.10_{\pm0.97}$ & $87.81_{\pm0.36}$& $85.13_{\pm0.56}$& $\underline{94.85_{\pm2.01}}$& $90.72_{\pm0.75}$& $93.15_{\pm0.26}$& $\underline{97.64_{\pm0.12}}$ & $80.81_{\pm0.10}$& $27.63_{\pm1.39}$
& \modified{$35.12_{\pm3.80}$} & \modified{$36.51_{\pm6.92}$}\\ 
\text{DropEdge}
& $84.74_{\pm2.01}$ & $78.92_{\pm0.78}$ & $87.77_{\pm0.38}$ & $84.99_{\pm0.30}$ & $94.50_{\pm1.75}$ & $90.10_{\pm1.27}$& $93.14_{\pm0.13}$ & $97.61_{\pm0.11}$ & $80.67_{\pm0.26}$ & $27.51_{\pm2.35}$ & \modified{$33.64_{\pm4.98}$} & \modified{$37.58_{\pm6.54}$} \\ 
\text{DropNode}
& $84.82_{\pm2.47}$& $79.01_{\pm1.34}$ & $87.80_{\pm0.34}$ & $85.02_{\pm0.55}$ & $91.89_{\pm1.21}$ & $90.53_{\pm0.58}$& $93.19_{\pm0.23}$ & $97.59_{\pm0.11}$ & $80.73_{\pm0.25}$ & $26.62_{\pm1.15}$& \modified{$32.33_{\pm5.09}$} & \modified{$35.92_{\pm6.81}$}\\ 
\text{DropMessage}
& $84.86_{\pm1.60}$ & $79.33_{\pm1.10}$ & $\underline{87.84_{\pm0.45}}$ & $84.96_{\pm0.42}$ & $94.62_{\pm2.24}$ & $\underline{91.01_{\pm0.75}}$ & $93.28_{\pm0.29}$ & $97.57_{\pm0.11}$ & $80.77_{\pm0.25}$ & $27.55_{\pm1.70}$ & \modified{$30.42_{\pm4.14}$} & \modified{$\mathbf{38.85_{\pm7.47}}$}\\ 
\text{TUNEUP}
& $84.58_{\pm1.46}$ & $\mathbf{79.43_{\pm0.83}}$ & $87.78_{\pm0.54}$ & $\mathbf{85.35_{\pm0.51}}$ & $94.73_{\pm1.95}$ & $90.62_{\pm1.12}$ & $92.12_{\pm0.40}$ & $97.26_{\pm0.15}$ & $80.74_{\pm0.18}$ & $26.56_{\pm1.43}$ & \modified{$34.85_{\pm3.81}$} & \modified{$35.82_{\pm5.38}$}\\ 
\text{GraphPatcher}
& $\underline{85.21_{\pm1.56}}$ & $79.00_{\pm0.66}$ & $87.66_{\pm0.47}$ & $\underline{85.22_{\pm0.65}}$ & $\mathbf{95.28_{\pm0.61}}$ & $\mathbf{91.51_{\pm0.69}}$ & $93.25_{\pm0.42}$ & $97.46_{\pm0.20}$ & $\mathbf{80.89_{\pm0.06}}$ & $26.85_{\pm1.38}$  & \modified{$\mathbf{35.72_{\pm4.41}}$} & \modified{$36.40_{\pm4.99}$}\\ 
\textsc{$\text{GCN}_B$}\text{(Ours)}
& $\mathbf{85.82_{\pm1.31}}$ & $\underline{79.41_{\pm0.99}}$ & $\mathbf{88.14_{\pm0.60}}$& $85.04_{\pm0.56}$& $94.84_{\pm2.05}$& $90.70_{\pm0.80}$& $\underline{93.87_{\pm0.26}}$& $\mathbf{97.70_{\pm0.11}}$& $\underline{80.85_{\pm0.13}}$& $\underline{27.65_{\pm1.48}}$& \modified{$\underline{35.38_{\pm4.28}}$} & \modified{$\underline{37.68_{\pm7.39}}$}\\
\midrule
\multicolumn{13}{c}{\textsc{Accuracy on Tail Nodes (Low-degree)}}\\
\midrule
\text{MLP}
& $63.20_{\pm1.36}$ & $60.27_{\pm1.42}$ & $84.30_{\pm0.43}$ & $73.02_{\pm1.02}$ & $81.91_{\pm0.90}$ & $75.51_{\pm0.73}$ & $\underline{92.96_{\pm0.28}}$ & $92.76_{\pm0.21}$ & $49.71_{\pm0.19}$ & $\mathbf{34.47_{\pm1.34}}$ & \modified{$35.59_{\pm3.33}$} & \modified{$28.94_{\pm5.09}$}\\ 
\text{GCN}
& $79.79_{\pm1.75}$ & $65.77_{\pm1.49}$ & $85.14_{\pm0.25}$& $77.83_{\pm0.58}$& $87.98_{\pm0.88}$ & $83.35_{\pm0.92}$& $90.04_{\pm0.53}$& $92.74_{\pm0.33}$ & $62.76_{\pm0.21}$& $\underline{32.33_{\pm2.79}}$ & \modified{$45.85_{\pm4.69}$} & \modified{$37.17_{\pm6.51}$}\\ 
\text{DropEdge}
& $79.61_{\pm1.56}$ & $65.54_{\pm1.32}$ & $85.21_{\pm0.34}$& $77.99_{\pm0.55}$ & $88.13_{\pm1.01}$& $\underline{83.65_{\pm1.13}}$& $90.09_{\pm0.32}$& $92.66_{\pm0.36}$ & $62.65_{\pm0.33}$& $31.94_{\pm1.91}$ & \modified{$43.20_{\pm3.17}$} & \modified{$34.91_{\pm5.93}$}\\ 
\text{DropNode}
& $80.19_{\pm1.63}$ & $65.50_{\pm1.28}$ & $\underline{85.33_{\pm0.24}}$& $77.62_{\pm0.67}$ & $87.69_{\pm1.01}$& $83.23_{\pm0.54}$& $90.12_{\pm0.54}$& $92.67_{\pm0.34}$ & $62.69_{\pm0.17}$& $30.77_{\pm1.51}$ & \modified{$42.76_{\pm2.09}$} & \modified{$34.33_{\pm5.88}$}\\ 
\text{DropMessage}
& $79.71_{\pm1.86}$ & $65.75_{\pm1.42}$ & $85.31_{\pm0.30}$ & $77.90_{\pm0.56}$ & $88.07_{\pm1.03}$ & $83.61_{\pm0.52}$ & $90.35_{\pm0.32}$ & $92.72_{\pm0.38}$ & $63.20_{\pm0.18}$ & $30.73_{\pm2.05}$ & \modified{$44.44_{\pm6.24}$} & \modified{$34.66_{\pm6.55}$}\\ 
\text{TUNEUP}
& $80.40_{\pm1.77}$ & $\underline{66.35_{\pm1.66}}$ & $85.12_{\pm0.28}$ & $78.13_{\pm0.80}$ & $87.87_{\pm0.97}$ & $83.45_{\pm0.86}$ & $88.98_{\pm0.59}$ & $91.64_{\pm0.32}$ & $62.89_{\pm0.19}$ & $31.09_{\pm3.29}$ & \modified{$45.51_{\pm4.66}$}& \modified{\underline{$37.50_{\pm6.91}$}}\\ 
\text{GraphPatcher}
& $\underline{81.13_{\pm1.91}}$ & $65.39_{\pm1.17}$ & $84.98_{\pm0.24}$ & $\underline{78.88_{\pm0.99}}$ & $\mathbf{89.28_{\pm0.66}}$ & $83.24_{\pm1.02}$ & $89.48_{\pm0.49}$ & $\underline{93.03_{\pm0.39}}$ & $\underline{63.56_{\pm0.13}}$ & $29.22_{\pm1.71}$ & \modified{\underline{$46.24_{\pm3.85}$}} & \modified{$\mathbf{38.29_{\pm6.88}}$}\\ 
\textsc{$\text{GCN}_B$}\text{(Ours)}
& $\mathbf{82.05_{\pm1.75}}$ & $\mathbf{67.17_{\pm1.37}}$ & $\mathbf{86.85_{\pm0.22}}$& $\mathbf{79.25_{\pm0.58}}$& $\underline{88.53_{\pm1.09}}$& $\mathbf{84.61_{\pm0.98}}$& $\mathbf{92.97_{\pm0.69}}$& $\mathbf{94.07_{\pm0.27}}$ & $\mathbf{64.40_{\pm0.20}}$& $32.25_{\pm1.99}$& \modified{$\mathbf{47.06_{\pm4.13}}$} & \modified{$37.35_{\pm6.99}$}\\
  \bottomrule
  \end{tabular}}
  \label{tbl:Overall}
  \vskip -0.0in
\end{table*}

\begin{table*}[t]
\centering
\caption{Performance (\%) of all models grouped by node homophily ratio. Homophilous nodes represent the top 33\% of nodes with the highest homophily ratio, while heterophilous nodes represent the bottom 33\%. \textbf{Bold} values indicate the best performance, and \underline{underlined} values indicate the second-best performance. Standard deviations are shown as subscripts.}
  \label{tbl:Structural-disparity}
\vskip 0.2in
\resizebox{\textwidth}{!}{
\begin{tabular}{l|cccccccccccc}
\toprule 
\multicolumn{1}{l|}{Method} & \texttt{Cora} & \texttt{Citeseer} & \texttt{PubMed} & \texttt{Wiki-CS} & \texttt{Photo} & \texttt{Computer} & \texttt{CS} & \texttt{Physics} & \texttt{Arxiv} & \texttt{Actor} & \texttt{Squirrel} & \texttt{Chameleon} \\ 
\midrule
\multicolumn{12}{c}{\textsc{Accuracy on Homophilous Nodes}}\\
\midrule
\text{MLP}
& $71.68_{\pm1.77}$ & $76.37_{\pm1.19}$ & $89.90_{\pm0.58}$ & $86.30_{\pm0.58}$ & $83.63_{\pm1.41}$ & $86.01_{\pm0.59}$ & $96.96_{\pm0.25}$ & $98.02_{\pm0.07}$ & $74.69_{\pm0.14}$ & $36.88_{\pm1.52}$ & \modified{$35.84_{\pm2.73}$} & \modified{$33.50_{\pm5.96}$}\\ 
\text{GCN}
& $92.69_{\pm1.53}$ & $87.96_{\pm1.25}$ & $95.99_{\pm0.22}$& $94.05_{\pm0.65}$& $96.45_{\pm3.76}$ & $94.47_{\pm0.53}$ & $99.25_{\pm0.15}$& $99.32_{\pm0.15}$ & $95.43_{\pm0.08}$& $\mathbf{39.47_{\pm1.62}}$& \modified{$48.71_{\pm3.57}$} & \modified{\underline{$47.15_{\pm5.79}$}}\\
\text{DropEdge}
& $92.38_{\pm1.88}$ & $88.06_{\pm0.90}$ & $96.12_{\pm0.36}$& $94.44_{\pm0.42}$& $96.35_{\pm3.84}$ & $\underline{94.72_{\pm0.43}}$ & $99.28_{\pm0.13}$& $99.32_{\pm0.12}$ & $95.68_{\pm0.15}$& $\underline{38.30_{\pm1.08}}$& \modified{$41.25_{\pm4.34}$} & \modified{$42.39_{\pm4.22}$}\\ 
\text{DropNode}
&$92.81_{\pm1.63}$ & $87.84_{\pm0.97}$ & $\underline{96.17_{\pm0.32}}$& $93.99_{\pm0.60}$& $96.48_{\pm3.71}$ & $94.29_{\pm0.30}$ & $\underline{99.31_{\pm0.17}}$& $99.29_{\pm0.13}$ & $95.62_{\pm0.13}$& $38.00_{\pm0.79}$& \modified{$40.73_{\pm5.13}$} & \modified{$42.67_{\pm5.93}$}\\ 
\text{DropMessage}
& $92.51_{\pm1.67}$ & $88.06_{\pm1.02}$ & $\mathbf{96.18_{\pm0.23}}$ & $\underline{94.49_{\pm0.34}}$ & $96.35_{\pm3.67}$ & $94.62_{\pm0.47}$ & $\mathbf{99.37_{\pm0.15}}$ & $99.32_{\pm0.13}$ & $\mathbf{95.79_{\pm0.06}}$ & $38.02_{\pm1.64}$ & \modified{$45.22_{\pm4.60}$} & \modified{$41.25_{\pm4.98}$}\\ 
\text{TUNEUP}
& $93.17_{\pm1.60}$ & $\underline{88.35_{\pm1.12}}$ & $96.04_{\pm0.30}$ & $94.05_{\pm0.65}$ & $96.30_{\pm3.76}$ & $94.57_{\pm0.55}$ & $99.14_{\pm0.14}$ & $99.26_{\pm0.13}$ & $95.67_{\pm0.11}$ & $37.94_{\pm2.57}$ & \modified{$48.58_{\pm3.79}$} & \modified{$\mathbf{47.89_{\pm5.89}}$}\\ 
\text{GraphPatcher}
& $\underline{93.23_{\pm1.24}}$ & $87.38_{\pm1.09}$ & $96.04_{\pm0.28}$ & $94.08_{\pm0.53}$ & $\mathbf{98.18_{\pm0.19}}$ & $94.65_{\pm0.64}$ & $98.33_{\pm0.30}$ & $\mathbf{99.44_{\pm0.07}}$ & $\underline{95.72_{\pm0.09}}$ & $34.76_{\pm1.18}$ & \modified{\underline{$48.71_{\pm2.89}$}} & \modified{$44.83_{\pm5.22}$}\\ 
$\text{GCN}_B \text{(Ours)}$
& $\mathbf{94.13_{\pm1.39}}$ & $\mathbf{88.78_{\pm1.52}}$ & $95.83_{\pm0.28}$& $\mathbf{94.65_{\pm0.57}}$& $\underline{96.54_{\pm3.76}}$ & $\mathbf{95.30_{\pm0.49}}$& $98.64_{\pm0.56}$& $\underline{99.33_{\pm0.17}}$& $95.66_{\pm0.13}$& $38.26_{\pm1.90}$& \modified{$\mathbf{49.52_{\pm3.40}}$} & \modified{$47.01_{\pm5.60}$}\\ 
\midrule
\multicolumn{12}{c}{\textsc{Accuracy on Heterophilous Nodes}}\\
\midrule
\text{MLP}
& $50.93_{\pm0.98}$ & $\mathbf{44.88_{\pm1.82}}$ & $\mathbf{73.22_{\pm0.71}}$ & $\mathbf{57.90_{\pm0.93}}$ & $81.71_{\pm1.34}$ & $66.84_{\pm0.69}$ & $\mathbf{85.96_{\pm0.29}}$& $\mathbf{89.08_{\pm0.37}}$& $\mathbf{34.53_{\pm0.18}}$& $\mathbf{31.66_{\pm2.79}}$& \modified{$32.56_{\pm4.13}$} & \modified{$29.53_{\pm4.83}$}\\ 
\text{GCN}
& $64.18_{\pm2.49}$ & $41.96_{\pm1.24}$ & $67.34_{\pm0.47}$& $51.89_{\pm1.08}$& $\underline{81.74_{\pm0.75}}$ & $71.42_{\pm1.25}$ & $76.81_{\pm0.68}$& $86.60_{\pm0.37}$ & $32.51_{\pm0.28}$& $19.13_{\pm1.55}$& \modified{$42.19_{\pm5.54}$} & \modified{$33.74_{\pm7.61}$}\\ 
\text{DropEdge}
& $64.09_{\pm2.68}$ & $41.78_{\pm1.27}$ & $67.12_{\pm0.52}$& $50.97_{\pm1.49}$& $81.50_{\pm0.69}$ & $71.06_{\pm1.95}$ & $76.92_{\pm0.35}$& $86.47_{\pm0.40}$ & $31.70_{\pm0.52}$& $19.29_{\pm1.72}$ & \modified{$41.59_{\pm6.04}$} & \modified{\underline{$37.01_{\pm5.02}$}}\\ 
\text{DropNode}
&$\underline{64.60_{\pm3.58}}$ & $41.59_{\pm1.08}$ & $67.24_{\pm0.51}$& $51.66_{\pm1.21}$& $80.67_{\pm0.97}$ & $71.38_{\pm1.21}$ & $76.93_{\pm0.65}$& $86.46_{\pm0.39}$ & $31.91_{\pm0.57}$& $18.93_{\pm1.02}$ & \modified{$41.54_{\pm4.52}$} & \modified{$36.78_{\pm5.27}$}\\ 
\text{DropMessage}
& $64.39_{\pm2.77}$ & $41.84_{\pm0.84}$ & $67.23_{\pm0.39}$ & $51.48_{\pm0.98}$ & $81.65_{\pm0.82}$ & $\underline{71.87_{\pm0.98}}$ & $77.27_{\pm0.51}$ & $86.47_{\pm0.44}$ & $32.29_{\pm0.46}$ & $19.49_{\pm1.18}$ & \modified{$40.79_{\pm4.68}$} & \modified{$\mathbf{37.90_{\pm7.63}}$}\\ 
\text{TUNEUP}
& $63.59_{\pm2.36}$ & $42.74_{\pm1.07}$ & $67.09_{\pm0.89}$ & $52.50_{\pm0.72}$ & $81.58_{\pm0.87}$ & $71.03_{\pm2.17}$ & $74.16_{\pm1.11}$ & $84.67_{\pm0.65}$ & $31.68_{\pm0.23}$ & $18.24_{\pm0.92}$ & \modified{$42.13_{\pm5.24}$} & \modified{$33.00_{\pm6.69}$}\\ 
\text{GraphPatcher}
& $64.17_{\pm2.22}$ & $\underline{44.47_{\pm0.89}}$ & $66.41_{\pm0.34}$ & $\underline{53.03_{\pm0.86}}$ & $81.46_{\pm1.68}$ & $71.56_{\pm1.96}$ & $78.67_{\pm0.53}$ & $86.87_{\pm0.64}$ & $33.38_{\pm0.14}$ & $18.49_{\pm1.33}$ & \modified{\underline{$42.41_{\pm5.16}$}} & \modified{$34.45_{\pm7.90}$}\\ 
$\text{GCN}_B \text{(Ours)}$
& $\mathbf{65.54_{\pm2.34}}$ & $43.24_{\pm1.05}$ & $\underline{70.77_{\pm0.71}}$& $52.44_{\pm1.27}$& $\mathbf{82.29_{\pm1.05}}$ & $\mathbf{72.02_{\pm1.25}}$ & $\underline{82.75_{\pm0.63}}$& $\underline{88.43_{\pm0.35}}$ & $\underline{34.02_{\pm0.34}}$& $\underline{19.96_{\pm1.49}}$& \modified{$\mathbf{42.42_{\pm4.97}}$} & \modified{$35.03_{\pm7.53}$}\\
  \bottomrule
  \end{tabular}}
  \vskip -0.0in

\end{table*}

\section{Experiments}

\mypar{Datasets}
We evaluate the accuracy of node classification for \modified{12} widely-used benchmark graphs, including Cora, Citeseer, Pubmed, Computers, Photo, CS, Physics, Ogbn-arxiv, Actor, \modified{Squirrel} and Chameleon \cite{pitfalls, ogbn, geomgcn}. 
\modified{For Squirrel and Chameleon, we use the filtered versions following prior work \cite{heterophily}}.
These datasets are frequently used in prior works \cite{graphpatcher} and cover graphs from diverse domains with varying characteristics. Detailed descriptions of these datasets are provided in \Cref{dataset_configuration}.

\mypar{Baselines}
We compare $\text{GCN}_B$ (GCN with \module) with its pre-trained model $\text{GCN}$ \cite{gcn} as well as closely related graph learning methods, grouped into two categories.
The first category includes random-dropping data augmentation algorithms for \acp{GNN} such as DropEdge \cite{dropedge}, DropNode \cite{dropnode}, and DropMessage \cite{dropmessage}. 
The second category includes state-of-the-art methods for addressing the degree bias problem such as TUNEUP \cite{tuneup} and GraphPatcher \cite{graphpatcher}, which are relevant since degree-robustness can be seen as a special case of edge-robustness.
Lastly, we include standard \acp{MLP}, providing a baseline for full edge-robustness as no edge-information is utilized.

\mypar{Setup}
We adopt the public dataset splits for Ogbn-arxiv \cite{ogbn}, Actor, Squirrel and Chameleon \cite{geomgcn, heterophily}. 
For the remaining eight datasets, \modified{we use an independently randomized 10\%/10\%/80\% split for training, validation, and test, respectively.} 
Our experiments are conducted using a two-layer GCN \cite{gcn}, with hyperparameters selected via grid search based on validation accuracy across five runs, following prior works \cite{tunedGNN}. 
For all baselines, we choose the hyperparameters as reported in the respective works or perform grid searches if no reference is available.
Detailed hyperparameter settings and search spaces can be found in \Cref{hp_configuration}.

\mypar{Evaluation}
All reported performances, including the ablation studies, are averaged over ten independent runs with different random seeds and splits, where we provide both the means and standard deviations. To assess edge-robustness, we evaluate the performance w.r.t. degree bias and structural disparity. For degree bias, we provide the performance on head nodes (the top 33\% of nodes by degree) and tail nodes (bottom 33\%), as defined in prior works \cite{graphpatcher}.
For structural disparity, nodes are grouped similarly based on their homophily ratio. Homophilous nodes are the top 33\% with the highest homophily ratios, while heterophilous nodes comprise the bottom 33\% with the lowest ratios.

\subsection{Overall Performance}
We compare $\text{GCN}_B$ with several baselines and present the results in \Cref{tbl:Overall}.
In terms of overall performance, $\text{GCN}_B$ achieves the highest accuracy in \modified{9} and the second-best in 3 out of \modified{12} cases.
Random-dropping methods such as DropEdge, DropNode, and DropMessage fail to consistently outperform GCN.
This suggests that it is hard to enhance the edge-robustness of \acp{GNN} solely by performing data augmentation function, due to the inductive bias of \acp{GNN} as we have claimed in \Cref{sec:dropedge}.
Although $\text{GCN}_B$ is also based on DropEdge, it consistently improves the performance of GCN since it effectively addresses its edge-vulnerability.

One notable observation is that \acp{MLP} surpass all models in the CS and Actor datasets.
This indicates that edges can contribute negatively to node classification depending on the structural property.
On these datasets, our $\text{GCN}_B$ achieves the highest performance among all GNNs, demonstrating its effectiveness for enhancing edge-robustness.

TUNEUP and GraphPatcher generally improve the performance of GCN, demonstrating that addressing degree bias enhances the overall accuracy. However, their effectiveness compared to the base GCN is more limited than previously reported.
Unlike previous works \cite{tuneup, graphpatcher}, which used basic hyperparameter settings, our experiments involve an extensive grid search to find optimal GCN configurations, making improvements more challenging. 
Despite this, $\text{GCN}_B$ significantly outperforms the base GCN, highlighting that edge-robustness is a critical factor for performance improvements in general.

\subsection{Addressing Degree Bias}

We assess the performance on head and tail nodes to evaluate the impact of our method on degree bias, as shown in \Cref{tbl:Overall}.
$\text{GCN}_B$ successfully mitigates degree bias, achieving at least the second-best accuracy on tail nodes in 10 and on head nodes in \modified{9} datasets, demonstrating that enhancing edge-robustness effectively reduces degree bias.
Random-dropping approaches fail to consistently improve the tail performance.
The models specifically designed for degree bias, TUNEUP and GraphPatcher, improve the tail performance but the improvement is relatively marginal.

Especially in heterophilous graphs (Actor and \modified{Squirrrel}), tail nodes generally outperform head nodes, contradicting typical degree bias trends. 
Degree-bias methods, which rely on the principle of transferring information from head to tail nodes, show limited effectiveness in these cases. However, $\text{GCN}_B$ still improves the performance by enhancing edge-robustness without being restricted to specific information flow, showing that edge-robustness is a broader focus.

\subsection{Addressing Structural Disparity}

\modified{We report the accuracy of the methods on homophilous and heterophilous nodes in \Cref{tbl:Structural-disparity} to evaluate structural disparity.}
Consistently with recent findings \cite{structuraldisparity}, our experiments show that MLPs generally outperform GNNs on heterophilous nodes, while GNNs perform better on homophilous nodes.
Methods for addressing degree bias, particularly GraphPatcher, show some improvements on heterophilous nodes but inconsistently, indicating a correlation between degree bias and structural disparity, although the two problems seem distinct.
$\text{GCN}_B$ achieves the highest GNN performance in heterophilous nodes on 9 datasets, demonstrating that enhancing edge-robustness can effectively mitigate structural disparity.

\begin{table}[t]
\caption{Accuracy of different GNN models before and after the integration with \module. \module achieves consistent and significant performance improvements across various architectures.}
\vskip -0.05in
\label{ablation_architecture}
\begin{center}
\begin{small}
\resizebox{\linewidth}{!}{
\begin{tabular}{l|cccc}
\toprule
&\texttt{Pubmed} & \texttt{CS} & \texttt{Arxiv} & \texttt{Chameleon} \\
\midrule
SAGE & $87.07_{\pm0.24}$ & $92.44_{\pm0.60}$ & $70.92_{\pm0.16}$ &  \modified{$37.34_{\pm3.56}$}\\
$\text{SAGE}_B$ & $\mathbf{88.09_{\pm0.28}}$ &  $\mathbf{93.36_{\pm0.47}}$ &$\mathbf{71.16_{\pm0.14}}$ & \modified{$\mathbf{37.85_{\pm3.80}}$} \\
\midrule
GAT & $85.64_{\pm0.24}$& $90.50_{\pm0.28}$& $71.86_{\pm0.14}$ & \modified{$38.54_{\pm2.70}$} \\
$\text{GAT}_B$ & $\mathbf{87.47_{\pm0.37}}$& $\mathbf{93.09_{\pm0.60}}$& $\mathbf{72.26_{\pm0.14}}$ & \modified{$\mathbf{39.08_{\pm2.84}}$} \\
\midrule
SGC & $84.01_{\pm0.76}$& $90.89_{\pm0.45}$& $69.15_{\pm0.05}$ & \modified{$38.24_{\pm3.00}$} \\
$\text{SGC}_B$ & $\mathbf{84.77_{\pm1.02}}$& $\mathbf{91.90_{\pm0.43}}$& $\mathbf{69.55_{\pm0.04}}$ & \modified{$\mathbf{38.91_{\pm3.08}}$} \\
\midrule
GIN & $85.42_{\pm0.20}$& $87.88_{\pm0.51}$& $63.94_{\pm0.53}$ & \modified{$39.84_{\pm2.69}$} \\
$\text{GIN}_B$ & $\mathbf{87.18_{\pm0.17}}$& $\mathbf{88.58_{\pm1.00}}$& $\mathbf{65.66_{\pm0.75}}$ & \modified{$\mathbf{41.72_{\pm2.41}}$} \\
\bottomrule
\end{tabular}}
\end{small}
\end{center}
\vskip -0.1in
\end{table}

\subsection{Generalization to Other GNN Architectures}
\label{section_generalization_architectures}

An important advantage of \module is its broad applicability to most GNN architectures due to its modular design.
To evaluate its versatility, we conduct extensive experiments on four well-known architectures: SAGE \cite{sage}, GAT \cite{gat}, SGC \cite{sgc}, and GIN \cite{gin}. In \Cref{ablation_architecture}, we compare the accuracy of each model before and after the integration of \module.
These results show that \module consistently delivers significant performance improvements across all architectures, demonstrating its wide applicability and effectiveness.

\section{Ablation Studies}
\label{Ablation Studies}

\begin{table}[t]
\caption{Accuracy with different layer architectures used as \module, with none of the alternatives outperforming our proposed design.}
\vskip -0.05in
\label{ablation_layer}
\begin{center}
\resizebox{\linewidth}{!}{
\begin{tabular}{l|cccc}
\toprule
&\texttt{Pubmed} & \texttt{CS} & \texttt{Arxiv} & \texttt{Chameleon} \\
\midrule
JKNet & $87.45_{\pm0.25}$ & $93.36_{\pm0.56}$ & $72.19_{\pm0.18}$ & \modified{$40.29_{\pm4.68}$} \\
Residual & $87.46_{\pm0.24}$ & $92.05_{\pm0.28}$ & $72.29_{\pm0.12}$ & \modified{$39.77_{\pm4.57}$} \\
$\agg$ &$86.82_{\pm0.55}$& $91.63_{\pm0.28}$ & $72.27_{\pm0.12}$ & \modified{$40.69_{\pm2.69}$} \\
\midrule
$\text{AGG}_{B}$ (ours) & $\mathbf{87.56_{\pm0.27}}$& $\mathbf{93.54_{\pm0.37}}$& $\mathbf{72.43_{\pm0.16}}$ & \modified{$\mathbf{40.96_{\pm4.83}}$}\\
\bottomrule
\end{tabular}}
\end{center}
\vskip -0.1in
\end{table}
\begin{table}[t]
\caption{Accuracy with alternative loss functions to train \module. }
\vskip -0.1in
\label{ablation_loss}
\begin{center}
\resizebox{\linewidth}{!}{
\begin{tabular}{l|cccc}
\toprule
&\texttt{Pubmed} & \texttt{CS} & \texttt{Arxiv} & \texttt{Chameleon} \\
\midrule
Pseudo-label & $86.62_{\pm0.37}$& $92.48_{\pm0.15}$ & $71.84_{\pm0.18}$ & \modified{$40.14_{\pm4.01}$} \\
Self-distillation & $86.15_{\pm0.35}$& $92.02_{\pm0.25}$& $72.18_{\pm0.19}$ & \modified{$40.22_{\pm4.22}$} \\
Cross-entropy & $86.67_{\pm0.16}$& $93.29_{\pm0.13}$& $71.94_{\pm0.13}$ & \modified{$40.76_{\pm4.19}$} \\
$\mathcal{L}_\text{RC}$ (train only)& $86.89_{\pm0.33}$& $92.67_{\pm0.57}$& $72.26_{\pm0.15}$ & \modified{$40.31_{\pm3.98}$} \\
\midrule
$\mathcal{L}_\text{RC}$ (ours)  & $\mathbf{87.56_{\pm0.27}}$& $\mathbf{93.54_{\pm0.37}}$& $\mathbf{72.43_{\pm0.16}}$ & \modified{$\mathbf{40.96_{\pm4.83}}$}\\
\bottomrule
\end{tabular}}
\end{center}
\vskip -0.1in
\end{table}
\begin{table}[t]
\caption{Accuracy with various architectural hyperparameters, \module significantly enhances performance in all configurations.}
\vskip -0.1in
\label{ablation_hp}
\medskip
\begin{center}
\begin{small}
\resizebox{\linewidth}{!}{
\begin{tabular}{l|cccccc}
\toprule
&\multicolumn{2}{c}{\texttt{Pubmed}} &\multicolumn{2}{c}{\texttt{Arxiv}} \\
\cmidrule(l){2-3}
\cmidrule(l){4-5}
&GCN & $\text{GCN}_B$ &GCN & $\text{GCN}_B$ \\
\midrule
\multicolumn{5}{c}{\textsc{Number of Layers}}\\
\midrule
2 & $86.48_{\pm0.17}$ & $\mathbf{87.56_{\pm0.27}}$ & $71.80_{\pm0.10}$ & $\mathbf{72.43_{\pm0.16}}$ \\
4 & $84.82_{\pm0.34}$ & $\mathbf{87.36_{\pm0.37}}$ & $71.53_{\pm0.20}$ & $\mathbf{72.42_{\pm0.20}}$\\
6 & $83.46_{\pm0.24}$& $\mathbf{86.64_{\pm0.58}}$& $70.77_{\pm0.27}$ & $\mathbf{71.79_{\pm0.21}}$ \\
8 & $82.68_{\pm0.19}$& $\mathbf{86.18_{\pm0.62}}$& $70.17_{\pm0.45}$ & $\mathbf{71.36_{\pm0.38}}$ \\
\midrule
\multicolumn{5}{c}{\textsc{Hidden Dimension Size}}\\
\midrule
64 & $86.54_{\pm0.26}$& $\mathbf{87.18_{\pm0.32}}$& $70.12_{\pm0.12}$ & $\mathbf{70.43_{\pm0.10}}$\\
128 & $86.56_{\pm0.25}$& $\mathbf{87.36_{\pm0.25}}$& $70.92_{\pm0.14}$ & $\mathbf{71.24_{\pm0.13}}$\\
256 & $86.54_{\pm0.17}$& $\mathbf{87.54_{\pm0.23}}$& $71.39_{\pm0.08}$ & $\mathbf{71.87_{\pm0.12}}$ \\
512 & $86.48_{\pm0.17}$ & $\mathbf{87.56_{\pm0.27}}$ & $71.80_{\pm0.10}$ & $\mathbf{72.43_{\pm0.16}}$ \\
\midrule
\multicolumn{5}{c}{\textsc{Activation Function}}\\
\midrule
ReLU & $86.48_{\pm0.17}$ & $\mathbf{87.56_{\pm0.27}}$ & $71.80_{\pm0.10}$ & $\mathbf{72.43_{\pm0.16}}$ \\
ELU & $86.51_{\pm0.19}$& $\mathbf{86.95_{\pm0.26}}$& $71.50_{\pm0.22}$ & $\mathbf{71.97_{\pm0.20}}$\\
Sigmoid & $85.66_{\pm0.16}$& $\mathbf{86.62_{\pm0.42}}$ & $71.54_{\pm0.14}$ & $\mathbf{72.05_{\pm0.20}}$ \\
Tanh & $85.28_{\pm0.19}$& $\mathbf{86.12_{\pm0.17}}$& $71.72_{\pm0.15}$& $\mathbf{72.25_{\pm0.14}}$ \\
\bottomrule
\end{tabular}}
\end{small}
\end{center}
\vskip -0.1in
\end{table}
\begin{table}[t]
\caption{Effect of each key component in the proposed method.}
\vskip -0.1in
\label{ablation}
\begin{center}
\resizebox{\linewidth}{!}{
\begin{tabular}{l|cccc}
\toprule
&\texttt{Pubmed} & \texttt{CS} & \texttt{Arxiv} & \texttt{Chameleon} \\
\midrule
$\text{GCN}_B$ & $\mathbf{87.56_{\pm0.27}}$& $\mathbf{93.54_{\pm0.37}}$& $\underline{72.43_{\pm0.16}}$ & \modified{$\mathbf{40.96_{\pm4.83}}$}\\
(-) Freezing & $\underline{87.50_{\pm0.31}}$& $\underline{93.50_{\pm0.41}}$& $\mathbf{72.91_{\pm0.14}}$ & \modified{$\underline{40.77_{\pm3.86}}$} \\
(-) \module & $86.82_{\pm0.60}$& $91.67_{\pm0.41}$& $72.21_{\pm0.11}$ & \modified{$40.35_{\pm3.89}$} \\
(-) Pre-train & $87.22_{\pm0.14}$ & $92.78_{\pm0.17}$ & $71.15_{\pm0.12}$ & \modified{$39.39_{\pm3.06}$} \\
\bottomrule
\end{tabular}}
\end{center}
\vskip -0.1in
\end{table}

\mypar{Different Layer Architectures}
In line with \Cref{essential_conditions}, we evaluate alternative layer architectures for \module, including residual connections, JKNet, and the \agg layer of GCN, while not changing other components of our method.
\Cref{ablation_layer} shows that our proposed design consistently outperforms these alternatives. 
Especially, the standard \agg shows no improvement on Pubmed and even degrades performance on CS from the base GCN. 
This suggests that using an additional \agg operation to resolve structural inconsistencies is ineffective as it introduces another inconsistency, as described in \Cref{corollary_discrepancy_cascade_gnn}.
These results highlight the importance of the conditions we propose in \Cref{essential_conditions} for designing effective \module.
\modified{Comprehensive results across all datasets, accompanied by an in-depth discussion of this ablation study, are presented in \cref{appendix_layer_ablation}.}

\mypar{Different Loss Functions}
Following \Cref{train with loss}, we explore alternative loss functions for training \module.
First, we evaluate a variant of $\mathcal{L}_\text{RC}$ by restricting the robustness term to the training set. While this approach is less effective than the proposed loss \modified{on all four datasets}, it still consistently improves performance over the base GCN.
Additionally, we test other loss functions, including the cross entropy using labels, knowledge distillation from the pre-trained model \cite{distilling, glnn}, and pseudo-labeling \cite{pseudolabel, tuneup}.
Although these alternatives lead to performance improvements when combined with \module, their effectiveness and consistency are limited compared to our objective function $\mathcal{L}_\text{RC}$. 

\mypar{Architectural Hyperparameters}
For broader applicability, we expect \module to enhance robustness across diverse architectural hyperparameters not only in 2-layer \acp{GNN}.
In \Cref{ablation_hp}, we present experimental results with varying numbers of layers, hidden dimension sizes, and activation functions in GCN.
\module consistently improves performance in all configurations, even when the base model performs poorly due to overly deep layers or small hidden sizes.
Notably in deep networks, where the performance typically degrades due to oversmoothing, \module significantly mitigates this decline, suggesting that the lack of edge-robustness is a potential reason of oversmoothing.
These results demonstrate that \module is broadly applicable to \acp{GNN} regardless of their architectural hyperparameters.
\modified{Further experimental results involving deeper architectures and additional datasets are presented in \Cref{appendix_deeper}.}

\mypar{Effect of Each Component}
We study the effect of each key component of our approach in \Cref{ablation}.
First, the accuracy usually degrades without freezing the parameters of the pre-trained GNN, indicating that freezing these parameters prevents unintended loss of the original knowledge.
Next, we test the performance without \module, which is equivalent to fine-tuning the pre-trained GNN using $\mathcal{L}_\text{RC}$.
\modified{This leads to a significant performance degradation, even performing worse than the pre-trained GNN on the CS dataset.}
This supports our theoretical findings that the original GNN architecture is inherently limited in optimizing the robustness term in \loss.
Finally, training $\text{GCN}_B$ in an end-to-end manner without pre-training also leads to performance degradation, demonstrating that the two-step training approach effectively optimizes both the bias and robustness terms.

\section{Conclusion}
In this work, we revisited DropEdge,
identifying a critical limitation—DropEdge fails to fully optimize its robustness objective during training. Our theoretical analysis revealed that this limitation arises from the inherent properties of the $\agg$ operation in \acp{GNN}, which struggles to maintain consistent representations under structural perturbations.
To address this issue, we proposed \modulelong (\module), a new parameter block designed to improve the $\agg$ operations of \acp{GNN}.
By refining the aggregation process, \module effectively optimizes the robustness objective, making the model significantly stronger to structural variations.
Experiments on \modified{12} node classification benchmarks and various \ac{GNN} architectures demonstrated significant performance gains driven by \module, especially for the problems related to structural inconsistencies, such as degree bias and structural disparity.
Despite its effectiveness, our approach has limitations as a two-step approach; its performance relies on pre-trained knowledge, as \module focuses primarily on improving robustness.
A potential direction for future work is to design a framework that enables \module to be trained end-to-end, allowing simultaneous optimization of both bias and robustness without dependency on pre-training.

\section*{Acknowledgements}

This work was supported by the National Research Foundation of Korea (NRF) grant funded by the Korea government (MSIT) (RS-2024-00341425 and RS-2024-00406985).


\section*{Impact Statement}
In this paper, we aim to advance the field of graph neural networks.
Our work is believed to improve the reliability and effectiveness of GNNs on various applications.
We do not foresee any direct negative societal impacts.

\nocite{langley00}

\bibliography{main}
\bibliographystyle{icml2025}

\newpage
\appendix
\onecolumn


\section{\modified{Dataset Overview and Training Configuration for Base GCN}}
\label{dataset_configuration}
We selected the \modified{12} datasets based on prior work \cite{graphpatcher}\modified{.  For Squirrel and Chameleon, we use the filtered versions provided by \cite{heterophily} via their public repository: \url{https://github.com/yandex-research/heterophilous-graphs}. The remaining 10 datasets are} sourced from the Deep Graph Library (DGL) \cite{dgl}. \modified{All graphs are treated as undirected.} Detailed statistics are provided in \Cref{tbl:datasets}.

\begin{table}[h]
\centering 
\caption{Statistics of datasets and hyperparameters used for training the base 2-layer GCN.}
\label{tbl:datasets}
\medskip
\resizebox{\textwidth}{!}{
  \begin{tabular}{lcccccccccccc}
  \toprule 
\multicolumn{1}{c}{} & \texttt{Cora} & \texttt{Citeseer} & \texttt{PubMed} & \texttt{Wiki-CS} & \texttt{Photo} & \texttt{Computer} & \texttt{CS} & \texttt{Physics} & \texttt{Arxiv} & \texttt{Actor}  & \texttt{Squirrel} & \texttt{Chameleon} \\
  \midrule
\# nodes
& $2,708$ & $3,327$ & $19,717$ & $11,701$ & $7,650$ & $13,752$ & $18,333$ & $34,493$ & $169,343$ & $7,600$ & \modified{$2,334$} & \modified{$890$}
\\
\# edges
& $10,556$ & $9,228$ & $88,651$ & $431,726$ & $238,162$ & $491,722$ & $163,788$ & $495,924$ & $1,166,243$ & $33,391$ & \modified{$93,996$}  & \modified{$18,598$}
\\
\# features
& $1,433$ & $3,703$ & $500$ & $300$ & $745$ & $767$ & $6,805$ & $8,415$ & $128$ & $932$ & \modified{$2,089$} & \modified{$2,325$}
\\
\# classes
& $7$ & $6$ & $3$ & $10$ & $8$ & $10$ & $15$ & $5$ & $40$ & $5$ & \modified{$5$} & \modified{$5$}
\\
Homophily Ratio & $0.8100$ & $0.7355$ & $0.8024$ & $0.6543$ & $0.8272$ & $0.7772$ & $0.8081$ & $0.9314$ & $0.6542$ & $0.2167$ & \modified{$0.2072$} & \modified{$0.2361$}\\ 
\midrule
Hidden Dim &
512 & 512 & 512 & 512 & 512 & 512 & 256 & 64 & 512 & 64 & \modified{256} & \modified{256}\\
Learning Rate &
\( 1\text{e-2} \) & \( 1\text{e-2} \) & \( 1\text{e-2} \) & \( 1\text{e-2} \) & \( 1\text{e-2} \) & \( 1\text{e-2} \) & \( 1\text{e-3} \) & \( 1\text{e-3} \) & \( 1\text{e-2} \) & \( 1\text{e-3} \) & \modified{\( 1\text{e-2} \)} & \modified{\( 1\text{e-2} \)} \\
Weight Decay &
\( 5\text{e-4} \) & \( 5\text{e-4} \) & \( 5\text{e-4} \) & \( 5\text{e-4} \) & \( 5\text{e-4} \) & \( 5\text{e-4} \) & \( 5\text{e-4} \) & \( 5\text{e-5} \) & \( 5\text{e-5} \) & \( 5\text{e-4} \) & \modified{\( 5\text{e-4} \)} & \modified{\( 5\text{e-4} \)}\\
Dropout &
0.5 & 0.7 & 0.7 & 0.3 & 0.5 & 0.3 & 0.2 & 0.5 & 0.5 & 0.7 & \modified{0.2} & \modified{0.2}\\
\modified{AGG Scheme}&
Sym & Sym & Sym & RW & Sym & Sym & Sym & Sym & RW & Sym & \modified{Sym} & \modified{Sym} \\
  \hline
  \end{tabular}}

\label{tbl:time consumption}
\end{table}

To train the base GCN, we conduct a grid search across five independent runs for each dataset, selecting the best hyperparameter configuration based on the highest validation accuracy, following the search space outlined in \cite{tunedGNN}. 
The search space included hidden dimensions [64, 256, 512], dropout ratios [0.2, 0.3, 0.5, 0.7], weight decay values [0, 5e-4, 5e-5], and learning rates [1e-2, 1e-3, 5e-3].
We use the Adam optimizer \cite{DBLP:journals/corr/KingmaB14} for training with early stopping based on validation accuracy, using a patience of 100 epochs across all datasets.

\modified{We also consider two GCN aggregation schemes following prior work \cite{graphpatcher}: (i) symmetric normalization, typically used in transductive settings, formulated as
\[
\mathrm{AGG}^{(l)}(\mH^{(l-1)}, \mA) = \mD^{-\frac{1}{2}}\mA\mD^{-\frac{1}{2}}\mH^{(l-1)}\mW^{(l)},
\]
and (ii) random-walk normalization, commonly used in inductive settings, given by
\[
\mathrm{AGG}^{(l)}(\mH^{(l-1)}, \mA) = \mD^{-1}\mA\mH^{(l-1)}\mW^{(l)}.
\]
We select the aggregation scheme that achieves higher validation accuracy.}

\modified{For the Squirrel and Chameleon datasets, we observe significant performance degradation when using standard GCN architectures. 
Therefore, guided by recommendations from \cite{heterophily}, which highlights data leakage issues and proposes filtered versions of these datasets, we incorporate residual connections and layer normalization into GCN. For reproducibility, detailed hyperparameter settings used for training the base GCN on each dataset are provided in~\Cref{tbl:datasets}.}

\section{\modified{Experiment Configurations for Baselines and Proposed Method}}
\modified{All experiments are conducted on an NVIDIA RTX A6000 GPU with 48 GB of memory.
We sincerely thank all the authors of baseline methods for providing open-source implementations, which greatly facilitated reproducibility and comparison.}

\modified{\mypar{MLP} 
For \acp{MLP}, we perform a grid search using the exact same hyperparameter search space as the base GCN.
This extensive search, which is often overlooked for \acp{MLP}, leads to a unique observation: well-tuned MLPs can outperform GNNs on certain datasets, such as Actor and CS.}

\modified{\mypar{Random Dropping Methods } For DropEdge \cite{dropedge}, DropNode \cite{dropnode}, and DropMessage \cite{dropmessage}, we use the official repository of DropMessage: \url{https://github.com/zjunet/DropMessage}, which offers a unified framework for empirical comparison of the random dropping techniques.
We conduct a grid search over drop ratios from 0.1 to 1.0 in increments of 0.1 for each method.}

\modified{\mypar{GraphPatcher} For GraphPatcher \cite{graphpatcher}, we use the official repository: \url{https://github.com/jumxglhf/GraphPatcher}.
We adopt the provided hyperparameter settings for overlapping datasets. For the remaining datasets, we perform a hyperparameter search over five independent runs, following the search space suggested in the original paper.}

\modified{\mypar{TUNEUP}
For TUNEUP~\cite{tuneup}, as the official implementation is not publicly available, we implemented the method ourselves.
For the second training stage of TUNEUP, we conduct a grid search over DropEdge ratios from 0.1 to 1.0, and use the same search space for learning rate, dropout ratio, and weight decay as in the base GCN.
Although TUNEUP was also manually implemented by the GraphPatcher authors, our extensively tuned implementation consistently yields higher performance across the most of datasets.}

\modified{\mypar{Aggregation Buffer} \module is trained after being integrated into a pre-trained \ac{GNN}. For training \module, we} use the Adam optimizer with a fixed learning rate of 1e{-2} and weight decay of 0.0 across all datasets. \modified{It is noteworthy that further performance gains may be achievable by tuning these hyperparameters for each dataset individually.} Since the hidden dimension and number of layers are determined by the pre-trained model, they are not tunable hyperparameters for \module.
Training of \module is early stopped based on validation accuracy, with a patience of 100 epochs across all datasets.

\module requires tuning on three key hyperparameters: the dropout ratio, DropEdge ratio, and the coefficient $\lambda$, which balances the bias and robustness terms in $\mathcal{L}_{\text{RC}}$, as described in \Cref{RC_loss_formula}. \modified{The search space used for these hyperparameters in our experiments is as follows:}
\begin{itemize}
\item $\lambda$ values: [1, 0.5, 0.1],
\item DropEdge ratio: [0.2, 0.5, 0.7, 1.0],
\item Dropout ratio: [0, 0.2, 0.5, 0.7].
\end{itemize}

For hyperparameter tuning, we follow the same process used for training the base GCN, conducting a search across five independent runs and selecting the configuration with the highest validation accuracy.
\modified{To ensure reproducibility, we provide the detailed hyperparameters for training \module across datasets in \Cref{tbl:hp_setting}, and release our implementation as open-source at \url{https://github.com/dooho00/agg-buffer}.}

\begin{table}[t]
\centering 
\caption{Hyperparameters used to train \module integrated with a pre-trained 2-layer GCN}
\medskip
\resizebox{\textwidth}{!}{
  \begin{tabular}{lcccccccccccc}
  \toprule 
\multicolumn{1}{c}{} & \texttt{Cora} & \texttt{Citeseer} & \texttt{PubMed} & \texttt{Wiki-CS} & \texttt{Photo} & \texttt{Computer} & \texttt{CS} & \texttt{Physics} & \texttt{Arxiv} & \texttt{Actor} & \texttt{Squirrel} & \texttt{Chameleon} \\
\midrule
$\lambda$ &
0.5 & 1.0 & 0.5 & 0.5 & 1.0 & 0.5 & 0.1 & 1.0 & 0.5 & 0.5 & \modified{0.1} & \modified{0.1}\\
DropOut &
0.7 & 0.7 & 0.0 & 0.5 & 0.0 & 0.2 & 0.7 & 0.7 & 0.0 & 0.2 & \modified{0.2} & \modified{0.0}\\
DropEdge &
0.5 & 0.2 & 0.2 & 0.2 & 0.5 & 0.5 & 0.2 & 0.2 & 0.5 & 0.5 & \modified{0.7} & \modified{0.7}\\
  \hline
  \end{tabular}}

\label{tbl:hp_setting}
\end{table}
\label{hp_configuration}

\section{Proofs in \Cref{sec:dropedge}}
\label{proof_section3}
\label{appendix_proof_section3}

\subsection{Proof of \Cref{lemma_lipschitz continuity}}
\label{appendix_lipschitz_continuity_activation}

Activation functions play a pivotal role in Graph Neural Networks (GNNs) by introducing non-linearity, which enables the network to model complex relationships within graph-structured data. Ensuring that these activation functions are Lipschitz continuous is essential for guaranteeing that similarly aggregated representation can result a simliar output after applying the activation function. In this section, we formally derive the Lipschitz continuity of three widely used activation functions: Rectified Linear Unit (ReLU), Sigmoid, and Gaussian Error Linear Unit(GELU).

\subsubsection{Definition of Lipschitz Continuity}

A function \( f: \mathbb{R} \rightarrow \mathbb{R} \) is said to be \textbf{Lipschitz continuous} if there exists a constant \( L \geq 0 \) such that for all \( x, y \in \mathbb{R} \),
\[
    |f(x) - f(y)| \leq L |x - y|.
\]

\subsubsection{Rectified Linear Unit (ReLU)}

The Rectified Linear Unit (ReLU) activation function is defined as:
\[
    \text{ReLU}(x) = \max(0, x).
\]

To prove that ReLU is 1-Lipschitz continuous, we need to show that:
\[
    |\text{ReLU}(x) - \text{ReLU}(y)| \leq |x - y| \quad \forall x, y \in \mathbb{R}.
\]

\textbf{Case 1:} \( x \geq 0 \) and \( y \geq 0 \)

In this case,
\[
\text{ReLU}(x) = x \quad \text{and} \quad \text{ReLU}(y) = y.
\]
Thus,
\[
|\text{ReLU}(x) - \text{ReLU}(y)| = |x - y| \leq |x - y|.
\]

\textbf{Case 2:} \( x < 0 \) and \( y < 0 \)

Here,
\[
\text{ReLU}(x) = 0 \quad \text{and} \quad \text{ReLU}(y) = 0.
\]
Therefore,
\[
|\text{ReLU}(x) - \text{ReLU}(y)| = |0 - 0| = 0 \leq |x - y|.
\]

\textbf{Case 3:} \( x \geq 0 \) and \( y < 0 \) (without loss of generality)

In this scenario,
\[
\text{ReLU}(x) = x \quad \text{and} \quad \text{ReLU}(y) = 0.
\]
Thus,
\[
|\text{ReLU}(x) - \text{ReLU}(y)| = |x - 0| = |x| \leq |x - y|.
\]
This inequality holds because \( x \geq 0 \) and \( y < 0 \), implying \( |x| \leq |x - y| \).

\vspace{1em}
In all cases, \( |\text{ReLU}(x) - \text{ReLU}(y)| \leq |x - y| \). Therefore, ReLU is \textbf{1-Lipschitz continuous}.

\subsubsection{Sigmoid Function}

The Sigmoid activation function is defined as:
\[
    \sigma(x) = \frac{1}{1 + e^{-x}}.
\]

The derivative of the Sigmoid function is:
\[
\sigma'(x) = \sigma(x)(1 - \sigma(x)).
\]
Using the fact that $\forall x \in \mathbb{R}$, $\sigma(x) > 0$, $1-\sigma(x) > 0$, we apply AM-GM inequality:
\[
\frac{\sigma(x) + (1 - \sigma(x))}{2} = \frac{1}{2} \geq \sqrt{\sigma(x)(1 - \sigma(x))}.
\]
Squaring both sides,
\[
\left( \frac{1}{2} \right)^2 = \frac{1}{4} \geq \sigma(x)(1 - \sigma(x)).
\]
Thus,

\[
0 \leq \sigma'(x) = \sigma(x)(1 - \sigma(x)) \leq \frac{1}{4}.
\]

By the Mean Value Theorem, for any \( x, y \in \mathbb{R} \), there exists some \( c \) between \( x \) and \( y \) such that:

\[
|\sigma(x) - \sigma(y)| = |\sigma'(c)||x - y|.
\]

Using that \( |\sigma'(c)| \leq \frac{1}{4} \), we have for all \( x, y \in \mathbb{R} \)

\[
|\sigma(x) - \sigma(y)| \leq \frac{1}{4} |x - y|.
\]

Therfore, Sigmoid is \textbf{$\mathbf{\frac{1}{4}}$-Lipschitz continuous}.

\vspace{1em}

\subsubsection{Gaussian Error Linear Unit(GeLU)}

The GELU activation function is expressed as:

\[
\text{GELU}(x) = x \Phi(x),
\]

where $\Phi(x)$ is the cumulative distribution function (CDF) of the standard normal distribution:

\[
\Phi(x) = \frac{1}{2} \left(1 + \operatorname{erf}\left( \frac{x}{\sqrt{2}} \right) \right).
\]

First, we compute the derivative of $\text{GELU}(x)$:

\[
\frac{d}{dx} \text{GELU}(x) = \Phi(x) + x \phi(x),
\]

where $\phi(x)$ is the probability density function (PDF) of the standard normal distribution:

\[
\phi(x) = \frac{1}{\sqrt{2\pi}} e^{-x^2/2}.
\]

In order to show the boundedness of the derivative, we examine the second derivative:

\[
\frac{d^2}{dx^2} \text{GELU}(x) = 2\phi(x) - x^2 \phi(x).
\]

Setting the second derivative equal to zero to find critical points:

\[
\frac{d^2}{dx^2} \text{GELU}(x) = 0 \implies \phi(x)(2 - x^2) = 0.
\]

Since $\phi(x) > 0$ for all $x \in \mathbb{R}$, the extrema of $\displaystyle \frac{d}{dx} \text{GELU}(x)$ occurs at $x=\pm \sqrt{2}$.

Hence, it is enough to examine the value of $\displaystyle \frac{d}{dx} \text{GELU}(x) = \Phi(x) + x \phi(x)$  at  $\pm \infty , \pm \sqrt{2}$ :

\[
\lim_{x \to \infty} \frac{d}{dx} \text{GELU}(x) = \lim_{x \to \infty} \Phi(x) + \lim_{x \to \infty} x \phi(x) = 1+0 = 1
\]

\[
\lim_{x \to - \infty} \frac{d}{dx} \text{GELU}(x) = \lim_{x \to - \infty} \Phi(x) + \lim_{x \to - \infty} x \phi(x) = 0+0 = 0
\]

\[
\frac{d}{dx} \text{GELU}(\sqrt{2}) = \Phi(\sqrt{2}) + \sqrt{2} \cdot \phi(\sqrt{2}) = \frac{1}{2} (1 + \operatorname{erf}(1)) + \sqrt{2} \frac{1}{\sqrt{2\pi}} e^{-1}    \approx 1.129
\]

\[
\frac{d}{dx} \text{GELU}(-\sqrt{2}) = \Phi(-\sqrt{2}) - \sqrt{2} \cdot \phi(-\sqrt{2}) = \frac{1}{2} (1 + \operatorname{erf}(-1)) - \sqrt{2} \frac{1}{\sqrt{2\pi}} e^{-1}    \approx -0.129
\]

using that

\[
\lim_{x \to \infty} \Phi(x) = 1, \lim_{x \to -\infty} \Phi(x) = 0, \lim_{x \to \pm \infty} x \phi(x) = \lim_{x \to \pm \infty} \frac{1}{\sqrt{2\pi}} x e^{-x^2/2} = 0
\]

Thus,

\[
\left| \frac{d}{dx} \text{GELU}(x) \right| \leq 1.13, \quad \forall x \in \mathbb{R}.
\]

Therefore, GELU is \textbf{1.13-Lipschitz continuous}.

\subsection{Proof of \Cref{theorem_layerwise_discrepancy_cascade}}
\label{appendix_layerwise_discrepancy_cascade_mlp}

The spectral norm of a matrix satisfies the following sub-multiplicative property:
\[
\| \mA \mB \|_2 \leq \| \mA \|_2 \| \mB \|_2
\]

Using this property, we establish the discrepancy bound for 1-layer propagation in standard neural networks. For two intermediate representations $\mH_1^{(l)}$ and $\mH_2^{(l)}$, we have:
\begin{equation*}
\begin{split}
\| \mH_1^{(l)}-\mH_2^{(l)} \|_2 
&=
\| \sigma(\mH_1^{(l-1)}\mW^{(l)}+\vb^{(l)}) - \sigma(\mH_2^{(l-1)}\mW^{(l)}+\vb^{(l)}) \|_2 \\
&\leq L_\sigma \| (\mH_1^{(l-1)}\mW^{(l)}+\vb^{(l)}) - (\mH_2^{(l-1)}\mW^{(l)}+\vb^{(l)}) \|_2 \\
&\leq L_\sigma \|\mH_1^{(l-1)} - \mH_2^{(l-1)} \|_2 \| \mW^{(l)} \|_2,
\end{split}
\end{equation*}
where $L_\sigma$ is the Lipschitz constant for activation function $\sigma$.

\subsection{Proof of \Cref{corollary_discrepancy_cascade}}
\label{appendix_discrepancy_cascade_mlp}

The discrepancy in the final output representation can be bounded recursively as follows:
\begin{equation*}
\begin{split}
    \|\mH_1^{(L)} - \mH_2^{(L)}\|_2
&\leq L_\sigma \| \mW^{(L)} \|_2 \|\mH_1^{(L-1)} - \mH_2^{(L-1)}\|_2 \\
&\leq L_\sigma^2 \| \mW^{(L)} \|_2\| \mW^{(L-1)} \|_2 \|\mH_1^{(L-2)} - \mH_2^{(L-2)}\|_2 \\
&\leq \cdots \\
&\leq L_\sigma^{(L-l)} \left(\prod_{i=l+1}^{L} \| \mW^{(i)} \|_2  \right)\|\mH_1^{(l)} - \mH_2^{(l)}\|_2 \\
&= C \|\mH_1^{(l)} - \mH_2^{(l)}\|_2,
\end{split}
\end{equation*}

where $C=L_\sigma^{(L-l)} \prod_{i=l+1}^{L} \| \mW^{(i)} \|_2 $ represents the cascade constant.

\subsection{Proof of \Cref{lemma_lipschitz continuity2}}
\label{appendix_lipschitz_continuity_aggregation}

In this proof, we aim to show how minimizing the discrepancy at each aggregation step effectively bounds the final representation discrepancy. To ensure consistent analysis, we assume that the representation matrix \( \mH_* \) is normalized, satisfying \( \|\mH_*\|_2 \leq |V| \). By quantifying propagation of discrepancy across linear transformations and various aggregation operations, we demonstrate that controlling intermediate discrepancies reduce the discrepancy in the final output.

\subsubsection{Regular Aggregation}
For regular aggregation, the representation discrepancy satisfies:
\begin{equation*}
\begin{split}
&\| \mathrm{AGG}^{(l)}(\mH_1^{(l-1)}, \mA_1)-\mathrm{AGG}^{(l)}(\mH_2^{(l-1)}, \mA_2) \|_2 \\
&= \| \mA_1\mH_1^{(l-1)}\mW^{(l)} - \mA_2\mH_2^{(l-1)} \mW^{(l)} \|_2 \\
&=\| (\mA_1\mH_1^{(l-1)} -{\mA_1}\mH_2^{(l-1)}+\mA_1\mH_2^{(l-1)}-\mA_2\mH_2^{(l-1)})\mW^{(l)} \|_2 \\
&\leq (\| \mA_1\mH_1^{(l-1)} -{\mA_1}\mH_2^{(l-1)} \|_2 +\| {\mA_1}\mH_2^{(l-1)}-\mA_2\mH_2^{(l-1)}\|_2) \| \mW^{(l)} \|_2 \\
&\leq (\| \mA_1 \|_2 \| \mH_1^{(l-1)} -\mH_2^{(l-1)} \|_2 + \| {\mA_1}-\mA_2\|_2 \| \mH_2^{(l-1)} \|_2) \| \mW^{(l)} \|_2 \\
&\leq \| \mA_1 \|_2 \| \mW^{(l)} \|_2 \| \mH_1^{(l-1)} -\mH_2^{(l-1)} \|_2 + |V| \| \mW^{(l)} \|_2 \| {\mA_1}-\mA_2 \|_2
\end{split}
\end{equation*}
This shows that if $\mA_1=\mA_2$, the representation discrepancy is linearly bounded by the input difference.

\subsubsection{Row-normalized Aggregation}
For row-normalized aggregation, the representation discrepancy satisfies:
\begin{equation*}
\begin{split}
&\| \mathrm{AGG}^{(l)}(\mH_1^{(l-1)}, \mA_1)-\mathrm{AGG}^{(l)}(\mH_2^{(l-1)}, \mA_2) \|_2 \\
&= \| \mD_1^{-1}\mA_1\mH_1^{(l-1)}\mW^{(l)} - \mD_2^{-1}{\mA_2}\mH_2^{(l-1)}\mW^{(l)} \|_2 \\
&=\| (\mD_1^{-1}\mA_1\mH_1^{(l-1)} - \mD_1^{-1}{\mA_1}\mH_2^{(l-1)}+\mD_1^{-1}{\mA_1}\mH_2^{(l-1)} - \mD_2^{-1}{\mA_2}\mH_2^{(l-1)})\mW^{(l)} \|_2 \\
&\leq ( \| \mD_1^{-1}\mA_1\mH_1^{(l-1)} - \mD_1^{-1}{\mA_1}\mH_2^{(l-1)} \|_2 + \| \mD_1^{-1}\mA_1\mH_2^{(l-1)} - \mD_2^{-1}\mA_2\mH_2^{(l-1)} \|_2 ) \| \mW^{(l)} \|_2 \\
&\leq 
(\| \mD_1^{-1}{\mA_1} \|_2 \| \mH_1^{(l-1)}-\mH_2^{(l-1)} \|_2 +  \| \mD_1^{-1}\mA_1-\mD_2^{-1}\mA_2\|_2 \| \mH_2^{(l-1)} \|_2) \| \mW^{(l)} \|_2 \\
&\leq \| \mW^{(l)} \|_2 \| \mH_1^{(l-1)}-\mH_2^{(l-1)} \|_2 + |V|\| \mW^{(l)} \|_2\| \mD_1^{-1}{\mA_1} - \mD_2^{-1}\mA_2 \|_2
\end{split}
\end{equation*}
noting that \( \| \mD_1^{-1} \mA_1 \|_2 = 1 \) because row-normalized matrices have their largest eigenvalue equal to 1. This demonstrates that the discrepancy is linearly bounded if $\mA_1=\mA_2$.

\subsubsection{Symmetric-normalized Aggregation}
For symmetric-normalized aggregation, the representation discrepancy satisfies:
\begin{equation*}
\begin{split}
&\| \mathrm{AGG}^{(l)}(\mH_1^{(l-1)}, \mA_1)-\mathrm{AGG}^{(l)}(\mH_2^{(l-1)}, \mA_2) \|_2 \\
 &=\| \mD_1^{-\frac{1}{2}}\mA_1\mD_1^{-\frac{1}{2}}\mH_1^{(l-1)}\mW^{(l)} - \mD_2^{-\frac{1}{2}}\mA_2\mD_2^{-\frac{1}{2}}\mH_2^{(l-1)}\mW^{(l)} \|_2 \\
&=\| (\mD_1^{-\frac{1}{2}}\mA_1\mD_1^{-\frac{1}{2}}\mH_1^{(l-1)} - \mD_1^{-\frac{1}{2}}{\mA_1}\mD_1^{-\frac{1}{2}}\mH_2^{(l-1)} + \mD_1^{-\frac{1}{2}}\mA_1\mD_1^{-\frac{1}{2}}\mH_2^{(l-1)} - \mD_2^{-\frac{1}{2}}\mA_2\mD_2^{-\frac{1}{2}}\mH_2^{(l-1)})\mW^{(l)} \|_2 \\
&\leq ( \| \mD_1^{-\frac{1}{2}}\mA_1\mD_1^{-\frac{1}{2}}\mH_1^{(l-1)} - \mD_1^{-\frac{1}{2}}{\mA_1}\mD_1^{-\frac{1}{2}}\mH_2^{(l-1)} \|_2 + \| \mD_1^{-\frac{1}{2}}\mA_1\mD_1^{-\frac{1}{2}}\mH_2^{(l-1)} - \mD_2^{-\frac{1}{2}}\mA_2\mD_2^{-\frac{1}{2}}\mH_2^{(l-1)} \|_2 ) \| \mW^{(l)} \|_2 \\
&\leq (
\| \mD_1^{-\frac{1}{2}}{\mA_1}\mD_1^{-\frac{1}{2}} \|_2 \| \mH_1^{(l-1)}-\mH_2^{(l-1)} \|_2 + \| \mD_1^{-\frac{1}{2}}\mA_1\mD_1^{-\frac{1}{2}}-\mD_2^{-\frac{1}{2}}\mA_2\mD_2^{-\frac{1}{2}}\|_2 \| \mH_2^{(l-1)} \|_2 )\| \mW^{(l)} \|_2 \\
&\leq \| \mW^{(l)} \|_2 \| \mH_1^{(l-1)}-\mH_2^{(l-1)} \|_2 + |V| \| \mW^{(l)} \|_2 \| \mD_1^{-\frac{1}{2}}\mA_1\mD_1^{-\frac{1}{2}} - \mD_2^{-\frac{1}{2}}\mA_2\mD_2^{-\frac{1}{2}} \|_2
\end{split}
\end{equation*}
where \( \| \mD_1^{-\frac{1}{2}} \mA_1 \mD_1^{-\frac{1}{2}} \|_2 = 1 \) due to its normalization. This follows from the fact that $\mI-{\mD_1}^{-\frac{1}{2}}{\mA_1}\mD_1^{-\frac{1}{2}}$ is positive semi-definite, and there exists $\vx=\mD_1^{\frac{1}{2}}\vy$ such that $\vx^\top \vx=\vx^\top \mD_1^{-\frac{1}{2}}{\mA_1}\mD_1^{-\frac{1}{2}} \vx$ where $\vy$ is eigenvector for $\mD_1-\mA_1$ with corresponding eigenvalue $0$.
This implies that if $\mA_1 = \mA_2$, the representation discrepancy is linearly bounded by the input difference, similar to the case of regular aggregation.

\subsection{Proof of \Cref{theorem_discrepancy_unbounded_gnn}}
\label{appendix_discrepancy_unbounded_gnn}
Before we proceed with the proof, let us first establish the following lemma.
\begin{lemma}
\label{lemma_property_nonlinear_function}
Let $\mA, \mB\in\mathbb{R}_+^{m\times m}$, be two distinct matrices ($\mA \neq \mB$), and let  $\sigma:\mathbb{R}^{m\times n}\rightarrow \mathbb{R}^{m\times n} $ be a non-constant, continuous, element-wise function. Then there exists some $\mZ\in\mathbb{R}^{m\times n}$ such that
\begin{equation*}
    \sigma(\mA\mZ)\neq\sigma(\mB\mZ)
\end{equation*}
Equivalently, no such $\sigma$ can satisfy $\sigma(\mA\mZ)=\sigma(\mB\mZ) $ for all $\mZ$ if $\mA\neq \mB$.
\end{lemma}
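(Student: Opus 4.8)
The plan is to exploit that $\sigma$ is \emph{element-wise}, so it is induced by a single scalar function $s:\mathbb{R}\to\mathbb{R}$ that is continuous and non-constant, with $\sigma(\mM)_{ij}=s(M_{ij})$. I would prove the contrapositive phrasing stated last: assuming $\sigma(\mA\mZ)=\sigma(\mB\mZ)$ for every $\mZ\in\mathbb{R}^{m\times n}$, I derive a contradiction with $\mA\neq\mB$. The first move is to reduce the matrix statement to a one-dimensional one. Since $\mA\neq\mB$, pick a row index $p$ with $\mA_{p,:}\neq\mB_{p,:}$, and call these row vectors $\va,\vb\in\mathbb{R}_+^{m}$. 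Feeding in matrices $\mZ$ that vanish outside a single column $j$ shows the standing assumption forces $s(\va^{\top}\vz)=s(\vb^{\top}\vz)$ for every $\vz\in\mathbb{R}^{m}$ (the free column of $\mZ$). It therefore suffices to show this scalar identity cannot hold for all $\vz$ when $\va\neq\vb$ are nonnegative and $s$ is non-constant and continuous.

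Next I would split into two cases according to whether $\va,\vb$ are linearly independent. If they are independent, the linear map $\vz\mapsto(\va^{\top}\vz,\vb^{\top}\vz)$ is surjective onto $\mathbb{R}^{2}$. Non-constancy of $s$ supplies values $u_0,v_0$ with $s(u_0)\neq s(v_0)$, and necessarily $u_0\neq v_0$; pulling $(u_0,v_0)$ back through the surjection yields some $\vz$ violating $s(\va^{\top}\vz)=s(\vb^{\top}\vz)$, a contradiction.

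The remaining case is when $\va,\vb$ are parallel. If $\va=\vzero$ then $\vb\neq\vzero$, so $\vb^{\top}\vz$ sweeps all of $\mathbb{R}$ while $\va^{\top}\vz\equiv 0$; the identity then forces $s\equiv s(0)$, contradicting non-constancy (and symmetrically if $\vb=\vzero$). Otherwise $\vb=\lambda\va$ with $\lambda\neq 1$, and here the nonnegativity hypothesis $\mA,\mB\in\mathbb{R}_+^{m\times m}$ is essential: it guarantees $\lambda>0$, ruling out $\lambda=-1$, for which $s$ would only need to be even (e.g.\ $s(t)=t^2$) rather than constant. Since $\va\neq\vzero$, the quantity $t:=\va^{\top}\vz$ ranges over all of $\mathbb{R}$, so the identity becomes $s(t)=s(\lambda t)$ for every $t\in\mathbb{R}$. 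Iterating gives $s(t)=s(\lambda^{n}t)$ for all integers $n$; letting $n\to+\infty$ or $n\to-\infty$ drives $\lambda^{n}t\to 0$ (using $\lambda>0,\ \lambda\neq 1$), and continuity of $s$ at $0$ yields $s(t)=s(0)$ for all $t$, again forcing $s$ constant—a contradiction.

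I expect the parallel case to be the main obstacle, precisely because surjectivity is unavailable there and one must instead combine the multiplicative scaling relation with continuity. The subtle point to get right is that the $\mathbb{R}_+$ assumption is exactly what excludes the even-function escape $\lambda=-1$, and that although $\mA,\mB$ have nonnegative entries, the test vectors $\vz$ are unrestricted, so the relevant inner products genuinely sweep all of $\mathbb{R}$.
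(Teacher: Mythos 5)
Your proof is correct, and its analytic core—iterating the scaling identity $s(t)=s(\lambda t)$ with $\lambda>0$, $\lambda\neq 1$ and invoking continuity at $0$ to conclude $s\equiv s(0)$, together with the separate treatment of the degenerate case where one side is identically zero—is exactly the paper's argument. The only structural difference is the reduction: the paper takes $\mZ$ supported on a single row so that one differing entry $a_{ij}\neq b_{ij}$ immediately yields the scalar identity $\sigma(az)=\sigma(bz)$, which renders your linearly-independent case (handled via surjectivity of $\vz\mapsto(\va^{\top}\vz,\vb^{\top}\vz)$) unnecessary, though your row-level reduction is equally valid.
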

\begin{proof}
Since $\mA\neq\mB$, there exists at least one index $(i,j)$ such that $a_{ij}\neq b_{ij}$. Denote $a_{ij}=a$ and $ b_{ij}=b$ ($a>0, b>0$). We will construct a particular $\mZ$ that reveals the difference under $\sigma$. Define $\mZ$ so that its $j$-th row is a scalar variable $z\in\mathbb{R}$ and all other entries are zero. Then, each $(i,l)$-entry of $\mA\mZ$ is $az$, while the corresponding entry of $\mB\mZ$ is $bz$ for $l=1,\cdots,n$. Because $\sigma$ is applied element-wise,
$\sigma(\mA\mZ)=\sigma(\mB\mZ)$ implies $\sigma(az)=\sigma(bz)$. We analyze two cases:
\begin{itemize}
    \item \textbf{Case 1:} $a=0$ or $b=0$: Without loss of generality, let $b=0$. Then $\sigma(az)=\sigma(0)$ must hold for all $z$. This forces $\sigma$ to be the constant, contradicting the given assumption.
    \item \textbf{Case 2:} $a\neq0$ and $b\neq 0$: Without loss of generality, let $a>b$. Then, $\sigma(az)=\sigma(bz)$ for all $z$ implies 
    \begin{equation*}
        \sigma(z)=\sigma\left(\frac{b}{a}z\right)=\sigma\left(\left(\frac{b}{a}\right)^2z\right)=\cdots=\sigma\left(\left(\frac{b}{a}\right)^nz\right)
    \end{equation*}
    for any $n>1$. Since $b/a<1$, by the continuity of $\sigma$, $\sigma(z)=\lim_{n\rightarrow\infty} \sigma\left(\left(\frac{b}{a}\right)^nz\right)=\sigma(0)$. Hence $\sigma$ would be constant on that range, again contradicting the non-constant assumption.
\end{itemize}
Thus, in either case, we find that the hypothesis $\sigma(\mA\mZ)=\sigma(\mB\mZ)$ for all $\mZ$ forces $\sigma$ to be constant, which is a contradiction. Therefore, there must exist some $\mZ$ for which $\sigma(\mA\mZ)\neq\sigma(\mB\mZ)$.
\end{proof}

Now, we use the above lemma to show that, if $\hat{\mA}_1\neq\hat{\mA}_2$, then no constant $C$ can bound the difference of GCN outputs in terms of the difference of inputs. Suppose, for contradiction, that there exists $C>0$ such that for every pair $\mH_1^{(l-1)},\mH_2^{(l-1)} $ exists, the following holds:
\begin{equation*}
    \| \mH_1^{(l)}-\mH_2^{(l)}\|_2=\| \sigma(\hat{\mA}_1\mH_1^{(l-1)}\mW^{(l)})-\sigma(\hat{\mA}_2\mH_2^{(l-1)} \mW^{(l)})\|_2 \leq C\| \mH_1^{(l-1)} - \mH_2^{(l-1)} \|_2
\end{equation*}
In particular, consider the case where $\mH_1^{(l-1)}=\mH_2^{(l-1)}$. Then $\|\mH_1^{(l-1)}-\mH_2^{(l-1)}\|_2=0$, so the right-hand side of above equation is zero. Thus, we must have $\sigma(\hat{\mA}_1\mH_1^{(l-1)}\mW^{(l)})=\sigma(\hat{\mA}_2\mH_2^{(l-1)}\mW^{(l)})$. However, by \cref{lemma_property_nonlinear_function}, there exists a suitable choice of $\mH$ so that $\mH\mW^{(l)}$ corresponds to $\mZ$ of the lemma, leading to
$\sigma(\hat{\mA}_1\mH\mW^{(l)})\neq\sigma(\hat{\mA}_2\mH\mW^{(l)})$. If $\mH_1^{(l-1)}=\mH_2^{(l-1)}=\mH$, we have
\begin{equation*}
    \| \sigma(\hat{\mA}_1\mH\mW^{(l)})-\sigma(\hat{\mA}_2\mH\mW^{(l)}) \|_2 > 0 = C\|\mH - \mH\|_2
\end{equation*}
This contradiction shows that no such input‐independent constant $C$ can exist.

\subsection{Proof of \Cref{corollary_discrepancy_cascade_gnn}}
\label{appendix_layerwise_discrepancy_cascade_gnn}

In GNNs with row-normalized or symmetric-normalized aggregation, the propagation of representation discrepancy is bounded as:
\begin{equation*}
\begin{split}
 \| \mH_1^{(l)} - \mH_2^{(l)} \|_2 
 &\leq L_\sigma \| \mathrm{AGG}^{(l)}(\mH_1^{(l-1)}, \mA_1)-\mathrm{AGG}^{(l)}(\mH_2^{(l-1)}, \mA_2) \|_2 \\
 &\leq L_\sigma \| \mW^{(l)} \|_2 \| \mH_1^{(l-1)}-\mH_2^{(l-1)} \|_2 + L_\sigma|V| \| \mW^{(l)} \|_2 \| \hat{\mA}_1-\hat{\mA}_2 \|_2
\end{split}
\end{equation*}
 where $\hat{\mA}$ represents the normalized adjacency matrix and $L_\sigma$  be the Lipschitz constant of the activation function. This result shows that adjacency matrix perturbations introduce an additional error term, which grows through layers.

\section{Proofs in \Cref{sec:method}}
\label{proof_section4}
\label{appendix_weight_matrix_collapse}
\label{Appendix Proof for Theorem 4.1}

\subsection{Proof of \Cref{advanced form}}
Note that once condition C2 is satisfied, then C1 automatically holds.
Thus, it suffices to show
\[
\|\mD_a^{-1} \mM\|_{F}
\;<\;
\|\tilde{\mD_a}^{-1}\mM\|_{F},
\quad
\text{where }\mD_a = \mD + \mI, \, \, \, \mM = \mH^{(0:l-1)}\,\mW^{(l)}.
\]

Since \(\tilde{\mA}\subset \mA\) by edge removal, each diagonal entry \(\tilde{d}_i\) of \(\tilde{\mD_a}\) satisfies \(0 < \tilde{d}_i \le d_i\). Thus
\(\frac{1}{\tilde{d}_i} \ge \frac{1}{d_i}\) for all \(i\).

Write \(\mD_a^{-1}\mM\) and \(\tilde{\mD_a}^{-1}\mM\) row by row. If \(M_{i,\cdot}\) denotes the \(i\)-th row of \(\mM\), then
\[
(\mD_a^{-1}\mM)_{i,\cdot} = \tfrac{1}{d_i}\,M_{i,\cdot},
\quad
(\tilde{\mD_a}^{-1}\mM)_{i,\cdot} = \tfrac{1}{\tilde{d}_i}\,M_{i,\cdot}.
\]
The Frobenius norm is
\[
\|\mD_a^{-1}\mM\|_{F}^2 
= \sum_{i=1}^n \tfrac{1}{d_i^2}\,\|M_{i,\cdot}\|_2^2,
\quad
\|\tilde{\mD_a}^{-1}\mM\|_{F}^2 
= \sum_{i=1}^n \tfrac{1}{\tilde{d}_i^2}\,\|M_{i,\cdot}\|_2^2.
\]
Because \(\tfrac{1}{\tilde{d}_i^2}\ge \tfrac{1}{d_i^2}\) whenever \(\tilde{d}_i \le d_i\), each term in the sum for \(\tilde{\mD}^{-1}\mM\) is greater or equal.  
 Therefore
\[
\|\mD_a^{-1}\mM\|_{F}^2 
\;\le\;
\|\tilde{\mD_a}^{-1}\mM\|_{F}^2
\quad\Longrightarrow\quad
\|\mD_a^{-1}\mM\|_{F}
\;\le\;
\|\tilde{\mD_a}^{-1}\mM\|_{F}.
\]
Note that the equality holds if and only if $\|M_{j,\cdot}\|_2^2 = 0$ for every $j$'s such that $\tilde{d}_j < d_j$. Such a configuration occupies a measure-zero subset of whole space,
and thus arises with probability zero in typical real-world scenarios.

Substituting back completes the proof:
\[
\|\mathrm{AGG}_B^{(l)}(\mA)\|_\mathrm{F}
=
\|\mD_a^{-1}\mM\|_{F}
\;<\;
\|\tilde{\mD_a}^{-1}\mM\|_{F}
=
\|\mathrm{AGG}_B^{(l)}(\tilde{\mA})\|_\mathrm{F}.
\]

\begin{figure}[t]
\vskip 0.2in
\begin{center}
\centerline{\includegraphics[width=0.7\columnwidth]{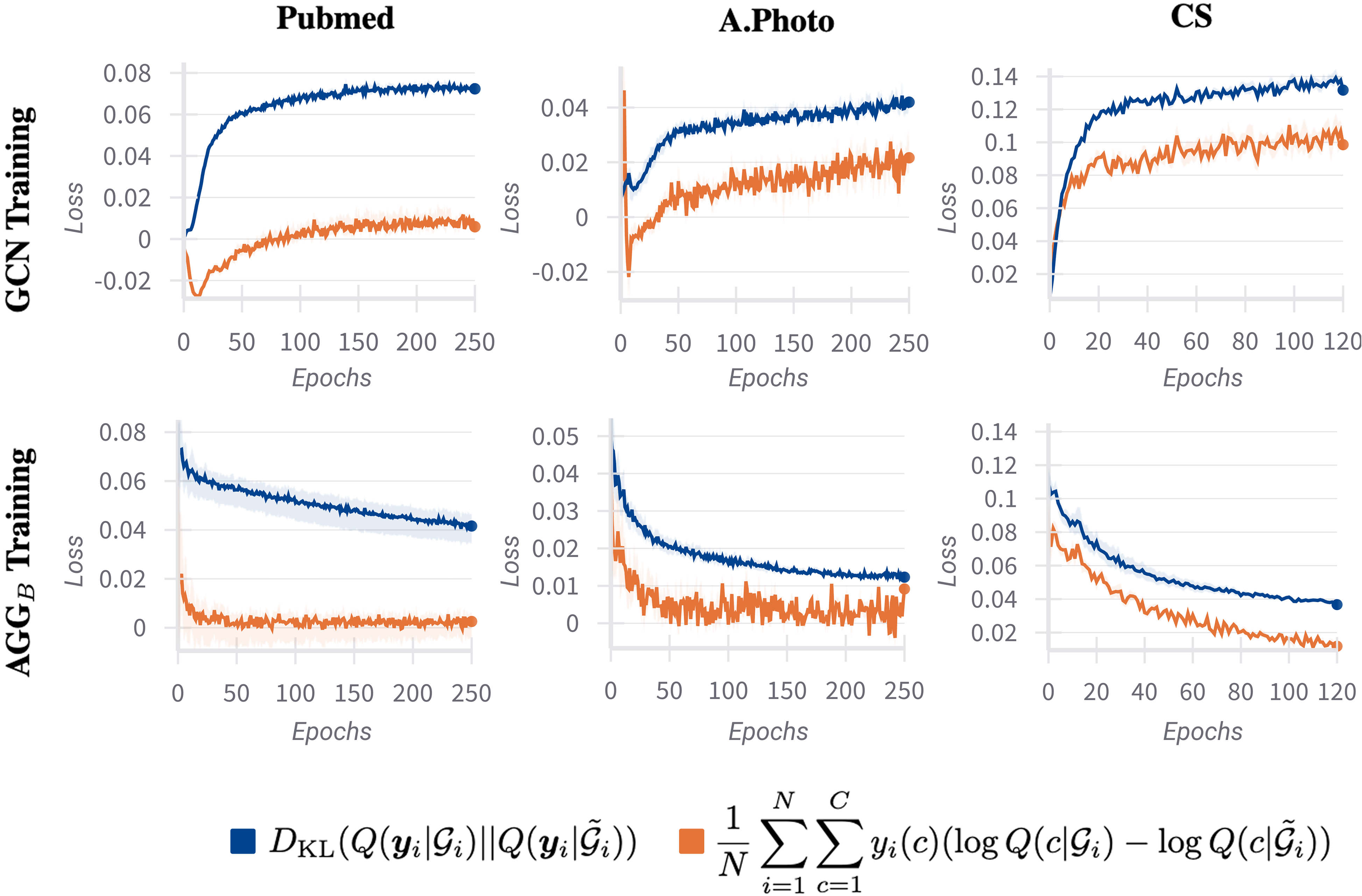}}
\caption{
    Changes in two different approximations of the robustness loss, $\mathbb{E}_{P}[\log Q(\vy_i|\mathcal{G}_i) - \log Q(\vy_i|\tilde{\mathcal{G}}_i)]$, during training of the base GCN (top row) and AGG$_B$ (bottom row).
    Each curve represents the average over 10 independent runs, with shaded areas indicating the minimum and maximum values.
    Blue represents the robustness term in our proposed \emph{robustness-controlled loss}, where $P$ is approximated by $Q$.
    Orange represents the label-based approximation, where P is approximated using ground-truth labels.
    Both approximations exhibit similar trends: robustness loss gradually emerges during GCN training and is further optimized during AGG$_B$ training.
}
\label{loss_curve_approx}
\end{center}
\vskip -0.2in
\end{figure}

\section{Validity of the Approximation on Robustness Loss}
\label{Validty of Approximation}
Between \eqref{objective_decomposition} and \eqref{approx_objective_decomposition}, we approximate the robustness term in the shifted objective under DropEdge. Specifically, the expectation with respect to the true distribution $P$ is approximated using the model’s predictive distribution $Q$ as follows:
$$
\E_{P}[ \log Q(\vy_i |\gG_i) - \log Q(\vy_i|\tilde{\gG_i})] \approx \KL (Q(\vy_i |\mathcal{G}_i) \Vert Q(\vy_i|\tilde{\mathcal{G}_i})).
$$
This approximation is based on the assumption $Q \approx P$. Since the true distribution $P$ is inaccessible during training, this assumption allows the term to be computed in practice.

Although the assumption $Q \approx P$ may not strictly hold—particularly in the early stages of training—it becomes increasingly valid as training progresses. 
Since the model is trained using cross-entropy loss, it explicitly minimizes the KL divergence $\KL(P(y_i|\gG_i) \Vert Q(y_i|\gG_i))$, gradually aligning $Q$ with $P$ on the training distribution.
Moreover, our framework employs a two-step training procedure, in which this approximation is utilized only after the base GCN has been trained. This staged design ensures that the approximation is applied under more reliable conditions, promoting stable and effective optimization of the proposed \emph{robustness-controlled loss}, $\gL_{\text{RC}}$.

To empirically evaluate the validity of this approximation, we estimate the expectation under $P$ using ground-truth labels as a proxy. 
Specifically, we computed the following quantity:
$$
\frac{1}{N} \sum_{i=1}^N \sum_{c=1}^{C} y_i(c) (\log Q(c|\gG_i) - \log Q(c|\tilde{\gG}_i)),
$$
where $y_i(c)$ denotes the one-hot encoded ground-truth label for node $i$.
While this proxy samples only one class per node and may be affected by label noise, it still offers a practical estimate for validating the approximation.
As shown in \cref{loss_curve_approx}, the trend of this label-based quantity closely mirrors that of the approximated KL divergence, indicating that our approximation effectively captures the underlying behavior. 
Furthermore, AGG$_B$ exhibits robust optimization behavior even under this label-based approximation, demonstrating its effectiveness in terms of robustness optimization.

\section{Assessing Edge-Robustness via Random Edge Removal at Test Time}
\label{test_edge_removal_appendix}

While we previously demonstrated the edge-robustness benefits of \module through improvements in degree bias and structural disparity, we now provide a more direct evaluation by measuring model performance under random edge removal during inference.
Specifically, we assess how test accuracy degrades as edges are randomly removed from the input graph. We compare three models: (1) a standard GCN trained normally, (2) a GCN trained with DropEdge, and (3) GCN$_B$, which incorporates \module into a pre-trained GCN. The results are presented in \Cref{tbl:edge removal}.

GCN$_B$ significantly outperforms both DropEdge and standard GCN in 56 out of 60 cases, indicating that \module enables GCNs to generate more consistent representations under structural perturbations, thereby exhibiting superior edge-robustness.

Interestingly, in 3 of the 4 cases where GCN$_B$ does not outperform the baselines, the performance of the standard GCN improves as edges are removed—specifically on the Actor dataset. 
This aligns with our observation in \Cref{tbl:Overall} that an MLP outperforms GCN on this dataset, suggesting that leveraging edge information may not be beneficial. These findings imply that the edges in Actor are likely too noisy or uninformative.
Nevertheless, even on Actor, GCN$_B$ maintains stable accuracy under edge removal, highlighting that \module still contributes to enhanced edge-robustness.

In contrast, models trained with DropEdge often show marginal improvements or even performance degradation compared to standard GCNs. This supports our claim that DropEdge alone is insufficient for achieving edge-robustness, due to the inherent inductive bias of GNNs.

\begin{table}[t]
\caption{Test accuracy under varying levels of random edge removal (\%) across 12 datasets. A value of 100\% indicates that no edges are removed, whereas 0\% indicates complete edge removal. Bold entries denote the highest performance for each setting. \module significantly enhances robustness, outperforming the baselines (GCN, DropEdge) in 56 out of 60 cases.}
\medskip
\begin{center}
\begin{small}
\resizebox{\linewidth}{!}{
\begin{tabular}{l|ccc|ccc|ccc|ccc}
\toprule
&\multicolumn{3}{c|}{\texttt{Cora}} &\multicolumn{3}{c|}{\texttt{Citeseer}} &\multicolumn{3}{c|}{\texttt{Pubmed}} &\multicolumn{3}{c}{\texttt{Wiki-CS}} \\
\cmidrule(){2-4}
\cmidrule(){5-7}
\cmidrule(){8-10}
\cmidrule(){11-13}
&GCN &DropEdge &$\text{GCN}_B$ 
&GCN &DropEdge &$\text{GCN}_B$ 
&GCN &DropEdge &$\text{GCN}_B$ 
&GCN &DropEdge &$\text{GCN}_B$ \\
\midrule
100\% & 
$83.44_{\pm1.44}$ & $83.27_{\pm1.55}$ & $\mathbf{84.84_{\pm1.39}}$& 
$72.45_{\pm0.80}$ & $72.29_{\pm0.60}$ & $\mathbf{73.32_{\pm0.85}}$ &
$86.48_{\pm0.17}$ & $86.47_{\pm0.21}$ & $\mathbf{87.56_{\pm0.27}}$ &
$80.26_{\pm0.34}$ & $80.22_{\pm0.55}$ & $\mathbf{80.75_{\pm0.42}}$\\
75\% & 
$81.85_{\pm1.28}$ & $81.78_{\pm1.53}$ & $\mathbf{84.53_{\pm1.38}}$ & 
$71.66_{\pm0.69}$ & $71.54_{\pm0.47}$ & $\mathbf{73.16_{\pm1.01}}$ & 
$85.99_{\pm0.20}$ & $86.01_{\pm0.12}$ & $\mathbf{87.52_{\pm0.33}}$ &
$79.28_{\pm0.41}$ & $79.34_{\pm0.55}$ & $\mathbf{80.17_{\pm0.47}}$\\
50\% & 
$79.63_{\pm1.65}$ & $79.09_{\pm1.51}$ & $\mathbf{84.31_{\pm1.37}}$ & 
$70.76_{\pm0.64}$ & $70.33_{\pm0.40}$ & $\mathbf{72.90_{\pm1.23}}$ &
$85.48_{\pm0.22}$ & $85.69_{\pm0.22}$ & $\mathbf{87.51_{\pm0.26}}$ &
$77.85_{\pm0.22}$ & $78.09_{\pm0.47}$ & $\mathbf{79.20_{\pm0.38}}$ \\
25\% & 
$76.28_{\pm1.67}$ & $76.48_{\pm1.06}$ & $\mathbf{84.44_{\pm1.59}}$ & 
$69.35_{\pm0.69}$ & $68.94_{\pm0.46}$ & $\mathbf{72.66_{\pm1.39}}$ &
$84.80_{\pm0.22}$ & $84.90_{\pm0.24}$ & $\mathbf{87.43_{\pm0.26}}$ &
$75.50_{\pm0.45}$ & $76.51_{\pm0.45}$ & $\mathbf{77.83_{\pm0.72}}$ \\
0\% & 
$72.63_{\pm1.82}$ & $72.33_{\pm1.70}$ & $\mathbf{84.17_{\pm1.50}}$ & 
$68.12_{\pm0.54}$ & $67.54_{\pm0.49}$ & $\mathbf{72.23_{\pm1.71}}$ &
$83.86_{\pm0.38}$ & $84.18_{\pm0.39}$ & $\mathbf{86.81_{\pm0.31}}$ &
$69.60_{\pm0.96}$ & $72.54_{\pm0.82}$ & $\mathbf{72.86_{\pm1.50}}$ \\
\bottomrule
\end{tabular}}
\end{small}
\end{center}

\begin{center}
\begin{small}
\resizebox{\linewidth}{!}{
\begin{tabular}{l|ccc|ccc|ccc|ccc}
\toprule
&\multicolumn{3}{c|}{\texttt{A.Photo}} &\multicolumn{3}{c|}{\texttt{A.Computer}} &\multicolumn{3}{c|}{\texttt{CS}} &\multicolumn{3}{c}{\texttt{Physics}} \\
\cmidrule(){2-4}
\cmidrule(){5-7}
\cmidrule(){8-10}
\cmidrule(){11-13}
&GCN &DropEdge &$\text{GCN}_B$ 
&GCN &DropEdge &$\text{GCN}_B$ 
&GCN &DropEdge &$\text{GCN}_B$ 
&GCN &DropEdge &$\text{GCN}_B$ \\
\midrule
100\% & 
$92.21_{\pm1.36}$ & $92.14_{\pm1.42}$ & $\mathbf{92.44_{\pm1.42}}$& 
$88.24_{\pm0.63}$ & $88.08_{\pm1.08}$ & $\mathbf{88.76_{\pm0.65}}$ &
$91.85_{\pm0.29}$ & $91.91_{\pm0.16}$ & $\mathbf{93.54_{\pm0.37}}$ &
$95.18_{\pm0.17}$ & $95.13_{\pm0.16}$ & $\mathbf{95.79_{\pm0.17}}$\\
75\% & 
$92.08_{\pm1.31}$ & $92.02_{\pm1.42}$ & $\mathbf{92.38_{\pm1.35}}$ & 
$88.01_{\pm0.59}$ & $87.83_{\pm0.99}$ & $\mathbf{88.71_{\pm0.66}}$ & 
$91.41_{\pm0.28}$ & $91.43_{\pm0.22}$ & $\mathbf{93.59_{\pm0.38}}$ &
$94.88_{\pm0.19}$ & $94.87_{\pm0.16}$ & $\mathbf{95.77_{\pm0.16}}$\\
50\% & 
$91.67_{\pm1.41}$ & $91.77_{\pm1.39}$ & $\mathbf{92.29_{\pm1.40}}$ & 
$87.41_{\pm0.57}$ & $87.38_{\pm1.04}$ & $\mathbf{88.54_{\pm0.55}}$ &
$90.77_{\pm0.25}$ & $90.71_{\pm0.22}$ & $\mathbf{93.56_{\pm0.42}}$ &
$94.53_{\pm0.17}$ & $94.54_{\pm0.20}$ & $\mathbf{95.76_{\pm0.17}}$ \\
25\% & 
$90.79_{\pm1.72}$ & $90.92_{\pm1.51}$ & $\mathbf{91.90_{\pm1.51}}$ & 
$86.20_{\pm0.54}$ & $86.28_{\pm0.88}$ & $\mathbf{88.02_{\pm0.55}}$ &
$89.91_{\pm0.18}$ & $89.95_{\pm0.21}$ & $\mathbf{93.53_{\pm0.53}}$ &
$93.99_{\pm0.18}$ & $93.96_{\pm0.18}$ & $\mathbf{95.72_{\pm0.17}}$ \\
0\% & 
$84.88_{\pm1.81}$ & $85.99_{\pm1.66}$ & $\mathbf{86.11_{\pm3.35}}$ & 
$76.77_{\pm1.48}$ & $78.17_{\pm1.32}$ & $\mathbf{82.50_{\pm1.19}}$ &
$93.10_{\pm0.31}$ & $93.17_{\pm0.23}$ & $\mathbf{93.18_{\pm0.74}}$ &
$94.33_{\pm0.51}$ & $94.63_{\pm0.43}$ & $\mathbf{95.44_{\pm0.20}}$ \\
\bottomrule
\end{tabular}}
\end{small}
\end{center}

\begin{center}
\begin{small}
\resizebox{\linewidth}{!}{
\begin{tabular}{l|ccc|ccc|ccc|ccc}
\toprule
&\multicolumn{3}{c|}{\texttt{Arxiv}} &\multicolumn{3}{c|}{\texttt{Actor}} &\multicolumn{3}{c|}{\texttt{Squirrel}} &\multicolumn{3}{c}{\texttt{Chameleon}} \\
\cmidrule(){2-4}
\cmidrule(){5-7}
\cmidrule(){8-10}
\cmidrule(){11-13}
&GCN &DropEdge &$\text{GCN}_B$ 
&GCN &DropEdge &$\text{GCN}_B$ 
&GCN &DropEdge &$\text{GCN}_B$ 
&GCN &DropEdge &$\text{GCN}_B$ \\
\midrule
100\% & 
$71.80_{\pm0.10}$ & $71.73_{\pm0.21}$ & $\mathbf{72.43_{\pm0.16}}$& 
$30.16_{\pm0.73}$ & $29.86_{\pm0.82}$ & $\mathbf{30.56_{\pm0.84}}$ &
$41.67_{\pm2.42}$ & $41.66_{\pm2.11}$ & $\mathbf{42.39_{\pm2.19}}$ &
$40.19_{\pm4.29}$ & $40.23_{\pm4.34}$ & $\mathbf{40.96_{\pm4.83}}$\\
75\% & 
$70.41_{\pm0.17}$ & $70.42_{\pm0.22}$ & $\mathbf{71.57_{\pm0.17}}$ & 
$\mathbf{30.72_{\pm0.98}}$ & $30.33_{\pm0.95}$ & $30.70_{\pm0.92}$ & 
$40.64_{\pm2.89}$ & $40.71_{\pm2.71}$ & $\mathbf{41.45_{\pm2.54}}$ &
$39.52_{\pm4.37}$ & $39.51_{\pm4.21}$ & $\mathbf{40.16_{\pm4.86}}$\\
50\% & 
$67.97_{\pm0.18}$ & $68.13_{\pm0.30}$ & $\mathbf{69.95_{\pm0.14}}$ & 
$31.09_{\pm1.21}$ & $30.49_{\pm0.99}$ & $\mathbf{31.26_{\pm1.08}}$ &
$39.88_{\pm2.19}$ & $39.63_{\pm2.36}$ & $\mathbf{40.86_{\pm2.21}}$ &
$39.93_{\pm4.22}$ & $\mathbf{40.13_{\pm5.18}}$ & $39.83_{\pm4.08}$ \\
25\% & 
$62.81_{\pm0.32}$ & $63.11_{\pm0.42}$ & $\mathbf{66.28_{\pm0.14}}$ & 
$\mathbf{31.24_{\pm0.91}}$ & $31.20_{\pm1.42}$ & $31.00_{\pm1.61}$ &
$37.47_{\pm2.52}$ & $37.33_{\pm2.30}$ & $\mathbf{40.06_{\pm2.66}}$ &
$38.41_{\pm3.20}$ & $38.46_{\pm3.68}$ & $\mathbf{38.57_{\pm4.39}}$ \\
0\% & 
$44.09_{\pm0.64}$ & $44.55_{\pm0.95}$ & $\mathbf{53.57_{\pm0.35}}$ & 
$\mathbf{33.68_{\pm1.38}}$ & $32.80_{\pm1.66}$ & $30.76_{\pm1.76}$ &
$32.18_{\pm2.51}$ & $32.40_{\pm1.67}$ & $\mathbf{35.40_{\pm2.43}}$ &
$33.24_{\pm3.13}$ & $33.22_{\pm3.10}$ & $\mathbf{36.29_{\pm4.64}}$ \\
\bottomrule
\end{tabular}}
\end{small}
\end{center}
\label{tbl:edge removal}
\end{table}

\section{Extensive Ablation Study of Alternative Layer Architectures}
\label{appendix_layer_ablation}

In this section, we extend the results presented in \Cref{ablation_layer} to all 12 datasets used in our experiments. 
As shown in \Cref{tbl:layer_ablation_rebuttal}, although several alternative layer architectures provide performance gains in specific cases under our training scheme and loss, only our original AGG$_B$ design consistently and significantly improves performance across all datasets.

We also evaluate a simplified, single-layer variant of \module that restricts the usable representation to only the immediate previous layer, $\mH^{(l-1)}$, and is formulated as:
\[
(\mD + \mI)^{-1} \mH^{(l-1)} \mW^{(l)}.
\]
This variant satisfies the same theoretical conditions—C1 (edge-awareness) and C2 (stability)—outlined in \Cref{essential_conditions}, just like our proposed architecture.
While it improves performance on 10 out of 12 datasets—making it the most competitive alternative—the improvements are relatively marginal compared to those achieved by AGG$_B$.

The motivation for integrating representations from all preceding layers, rather than relying on a single layer, is to mitigate the risk of accumulating structural discrepancies.
Ideally, AGG$_B$ could fully correct such inconsistencies at each layer. In practice, however, residual discrepancies may persist in intermediate layers and propagate through the network, ultimately affecting the final output.
Relying solely on $\mH^{(l-1)}$ risks amplifying these unresolved issues, whereas aggregating $\mH^{(0:l-1)}$ enables AGG$_B$ to leverage earlier, potentially less corrupted representations, leading to more robust corrections.

Importantly, the performance gains from AGG$_B$ are not solely attributed to multi-layer integration. We also compare it with a JKNet-style block, which similarly aggregates outputs from all previous layers and is formulated as $\mH^{(0;l-1)} \mW^{(l)}$.
AGG$_B$ outperforms this JKNet-style design on 9 out of 12 datasets, while the JKNet-style variant even degrades performance on 4 datasets.
This result suggests that the inclusion of degree normalization, $(\mD + \mI)^{-1}$—a key component of AGG$_B$ that ensures satisfaction of the conditions outlined in \Cref{essential_conditions} (i.e., (1) edge-awareness and (2) stability)—is crucial for achieving consistent performance improvements across diverse datasets.

Although \module performs robustly in our experiments and ablations, we acknowledge that concatenating all preceding representations can introduce information redundancy and noise, particularly as GNN depth increases. However, this is not currently a critical issue, as GNNs typically achieve optimal performance at relatively shallow depths (e.g., two layers) due to over-smoothing. That said, as deeper GNNs become more effective in future research, developing more streamlined integration mechanisms that reduce redundancy and noise presents a promising direction for extending this work.

\begin{table}[t]
\centering
\caption{
    Accuracy across different layer architectures used as \module. Blue indicates no improvement over the base GCN, and bold text highlights the best-performing architecture per dataset. The rightmost column reports the average rank. Our proposed design consistently improves performance and achieves the best overall ranking.
}
\medskip
\resizebox{\linewidth}{!}{
\begin{tabular}{l|cccccccccccc|c}
\toprule 
\multicolumn{1}{l|}{Method} &\texttt{Cora} & \texttt{Citeseer} & \texttt{PubMed} & \texttt{Wiki-CS} & \texttt{A.Photo} & \texttt{A.Computer} & \texttt{CS} & \texttt{Physics} & \texttt{Arxiv} & \texttt{Actor} & \texttt{Squirrel} & \texttt{Chameleon} & Rank\\ 
\midrule
\text{GCN} 
& $83.44_{\pm1.44}$ & $72.45_{\pm0.80}$ & $86.48_{\pm0.17}$& $80.26_{\pm0.34}$& $92.21_{\pm1.36}$ & $88.24_{\pm0.63}$& $91.85_{\pm0.29}$& $95.18_{\pm0.17}$ & $71.80_{\pm0.10}$& $30.16_{\pm0.73}$& $41.67_{\pm2.42}$ & $40.19_{\pm4.29}$ & $4.75$\\ 
\midrule
\agg 
& $84.01_{\pm1.58}$ & $72.70_{\pm0.82}$ & $86.82_{\pm0.55}$ & \textcolor{blue}{$79.54_{\pm0.94}$} & \textcolor{blue}{$91.74_{\pm1.76}$} & \textcolor{blue}{$88.03_{\pm0.74}$} & \textcolor{blue}{$91.63_{\pm0.28}$} & \textcolor{blue}{$95.02_{\pm0.31}$} & $72.27_{\pm0.12}$ & \textcolor{blue}{$29.29_{\pm1.01}$} & \textcolor{blue}{$40.18_{\pm4.48}$} & $40.69_{\pm2.69}$ & $4.92$\\ 
Residual
& \textcolor{blue}{$83.29_{\pm2.50}$} & $72.71_{\pm0.83}$ & $87.46_{\pm0.24}$ & $\mathbf{80.80_{\pm1.08}}$ & $92.40_{\pm1.64}$ & $\mathbf{88.95_{\pm0.44}}$ & $92.05_{\pm0.28}$ & $95.27_{\pm0.38}$ & $72.29_{\pm0.12}$ & $30.57_{\pm0.53}$ & \textcolor{blue}{$41.32_{\pm2.95}$} & \textcolor{blue}{$39.77_{\pm4.57}$} & $3.00$\\ 
JKNet-style
& \textcolor{blue}{$83.35_{\pm2.22}$} & $72.76_{\pm0.78}$ & $87.45_{\pm0.25}$ & $80.78_{\pm0.62}$ & \textcolor{blue}{$92.08_{\pm1.61}$} & \textcolor{blue}{ $87.97_{\pm0.77}$} & $93.36_{\pm0.56}$ & $\mathbf{95.85_{\pm0.23}}$ & $72.19_{\pm0.18}$ & $\mathbf{30.70_{\pm1.21}}$ & \textcolor{blue}{$40.70_{\pm2.30}$} & $40.29_{\pm4.68}$ & $3.42$\\ 
$\text{AGG}_{B}$ (single)
&$84.28_{\pm1.57}$ & $72.78_{\pm0.52}$ & $\mathbf{87.58_{\pm0.38}}$ & $80.70_{\pm0.00}$ & \textcolor{blue}{$92.00_{\pm1.51}$} & $88.50_{\pm0.73}$ & $92.01_{\pm0.37}$ & $95.23_{\pm0.27}$ & $72.09_{\pm0.11}$ & \textcolor{blue}{$30.13_{\pm0.73}$} & $41.91_{\pm2.55}$ & $40.64_{\pm4.83}$ & $3.33$\\ 
\midrule
$\text{AGG}_{B}$ (ours)
& $\mathbf{84.84_{\pm1.39}}$ & $\mathbf{73.32_{\pm0.85}}$ & $87.56_{\pm0.27}$& $80.75_{\pm0.42}$& $\mathbf{92.44_{\pm1.42}}$ & $88.76_{\pm0.65}$& $\mathbf{93.54_{\pm0.37}}$& $95.79_{\pm0.17}$ & $\mathbf{72.43_{\pm0.16}}$& $30.56_{\pm0.84}$ & $\mathbf{42.39_{\pm2.19}}$ & $\mathbf{40.96_{\pm4.83}}$& $1.58$\\
  \bottomrule
  \end{tabular}}
  \label{tbl:layer_ablation_rebuttal}

\end{table}

\section{Extensive Ablation Study on Deeper GCNs}
\label{appendix_deeper}
\begin{table}[t]
\caption{Accuracy with varying GCN depths. $\text{DropEdge}_{B}$ denotes a GCN trained with DropEdge followed by integration of \module. All experiments use a hidden dimension of 256 and a learning rate of 0.001. For fair comparison, the edge dropping ratio is fixed at 0.5 for both DropEdge and \module, and the hyperparameter $\lambda$ is fixed at 1.0. Integrating \module improves performance in 56 out of 60 configurations.}
\label{rebuttal_deeper_gcn_table}
\medskip
\begin{center}
\begin{small}
\resizebox{\linewidth}{!}{
\begin{tabular}{l|cccc|cccc|cccc}
\toprule
&\multicolumn{4}{c|}{\texttt{Cora}} &\multicolumn{4}{c|}{\texttt{Pubmed}} &\multicolumn{4}{c}{\texttt{A.Computer}} \\
\cmidrule(){2-5}
\cmidrule(){6-9}
\cmidrule(){10-13}
&GCN &$\text{GCN}_B$ & DropEdge & $\text{DropEdge}_B$
&GCN &$\text{GCN}_B$ & DropEdge & $\text{DropEdge}_B$
&GCN &$\text{GCN}_B$ & DropEdge & $\text{DropEdge}_B$  \\
\midrule
4 & $81.98_{\pm1.45}$ & $\mathbf{82.50_{\pm1.65}}$& 
$82.51_{\pm1.76}$ & $\mathbf{82.96_{\pm1.83}}$ &
$83.73_{\pm0.20}$ & $\mathbf{86.87_{\pm0.55}}$ &
$84.13_{\pm0.25}$ & $\mathbf{87.14_{\pm0.35}}$ &
$86.66_{\pm0.53}$ & $\mathbf{86.82_{\pm0.49}}$ &
$86.23_{\pm1.11}$ & $\mathbf{86.34_{\pm1.14}}$\\
8 & $77.54_{\pm2.05}$ & $\mathbf{79.99_{\pm1.45}}$ & 
$79.64_{\pm1.89}$ & $\mathbf{80.86_{\pm1.44}}$ & 
$83.07_{\pm0.16}$ & $\mathbf{85.90_{\pm0.48}}$ &
$83.19_{\pm0.27}$ & $\mathbf{85.49_{\pm0.58}}$ &
$80.60_{\pm2.22}$ & $\mathbf{81.07_{\pm2.33}}$ &
$79.78_{\pm1.86}$ & $\mathbf{80.21_{\pm1.92}}$\\
12 & $73.42_{\pm2.08}$ & $\mathbf{77.70_{\pm1.99}}$ & 
$78.41_{\pm1.71}$ & $\mathbf{79.65_{\pm2.01}}$ &
$82.44_{\pm0.29}$ & $\mathbf{85.48_{\pm0.68}}$ &
$82.53_{\pm0.35}$ & $\mathbf{84.64_{\pm0.52}}$ &
$75.08_{\pm2.24}$ & $\mathbf{76.45_{\pm2.06}}$ &
$76.17_{\pm3.65}$ & $\mathbf{77.16_{\pm3.75}}$\\
16 & $69.78_{\pm3.45}$ & $\mathbf{75.67_{\pm3.32}}$ & 
$77.92_{\pm1.92}$ & $\mathbf{79.39_{\pm1.72}}$ &
$82.08_{\pm0.45}$ & $\mathbf{85.21_{\pm0.64}}$ &
$81.76_{\pm0.47}$ & $\mathbf{83.78_{\pm0.68}}$ &
$73.50_{\pm4.52}$ & $\mathbf{74.83_{\pm4.73}}$ &
$74.89_{\pm1.42}$ & $\mathbf{76.20_{\pm1.41}}$\\
20 & $64.15_{\pm5.08}$ & $\mathbf{72.23_{\pm3.45}}$ & 
$73.89_{\pm2.99}$ & $\mathbf{76.31_{\pm3.41}}$ &
$81.75_{\pm0.55}$ & $\mathbf{85.23_{\pm0.75}}$ &
$81.17_{\pm0.36}$ & $\mathbf{83.20_{\pm0.51}}$ &
$67.99_{\pm3.98}$ & $\mathbf{68.83_{\pm4.42}}$ &
$72.14_{\pm2.88}$ & $\mathbf{73.69_{\pm3.05}}$\\
\bottomrule
\end{tabular}}
\end{small}
\end{center}

\begin{center}
\begin{small}
\resizebox{\linewidth}{!}{
\begin{tabular}{l|cccc|cccc|cccc}
\toprule
&\multicolumn{4}{c|}{\texttt{CS}} &\multicolumn{4}{c|}{\texttt{Squirrel}} &\multicolumn{4}{c}{\texttt{Chameleon}} \\
\cmidrule(){2-5}
\cmidrule(){6-9}
\cmidrule(){10-13}
&GCN &$\text{GCN}_B$ & DropEdge & $\text{DropEdge}_B$
&GCN &$\text{GCN}_B$ & DropEdge & $\text{DropEdge}_B$
&GCN &$\text{GCN}_B$ & DropEdge & $\text{DropEdge}_B$  \\
\midrule
4 & 
$89.63_{\pm0.40}$ & $\mathbf{91.61_{\pm0.34}}$& 
$89.70_{\pm0.37}$ & $\mathbf{91.51_{\pm0.43}}$ &
$33.96_{\pm1.01}$ & $\mathbf{34.37_{\pm1.31}}$ &
$33.76_{\pm1.42}$ & $\mathbf{33.82_{\pm1.38}}$ &
$39.46_{\pm3.17}$ & $\mathbf{40.00_{\pm3.05}}$ &
$37.95_{\pm4.27}$ & $\mathbf{38.96_{\pm4.17}}$\\
8 & 
$88.44_{\pm0.38}$ & $\mathbf{91.33_{\pm0.32}}$ & 
$88.58_{\pm0.22}$ & $\mathbf{91.24_{\pm0.18}}$ & 
$34.06_{\pm1.28}$ & $\mathbf{34.41_{\pm1.96}}$ &
$34.14_{\pm1.47}$ & $\mathbf{34.41_{\pm1.38}}$ &
$\mathbf{37.55_{\pm2.51}}$ & $37.33_{\pm2.86}$ &
$\mathbf{37.99_{\pm3.98}}$ & $37.06_{\pm4.31}$\\
12 & 
$86.84_{\pm0.48}$ & $\mathbf{90.57_{\pm0.28}}$ & 
$87.89_{\pm0.26}$ & $\mathbf{91.03_{\pm0.33}}$ &
$\mathbf{34.78_{\pm1.67}}$ & $34.74_{\pm1.69}$ &
$34.91_{\pm1.74}$ & $\mathbf{35.05_{\pm1.65}}$ &
$37.18_{\pm4.83}$ & $\mathbf{37.32_{\pm5.14}}$ &
$\mathbf{36.11_{\pm4.43}}$ & $35.25_{\pm3.76}$\\
16 & 
$85.04_{\pm0.81}$ & $\mathbf{89.98_{\pm0.52}}$ & 
$87.41_{\pm0.44}$ & $\mathbf{90.76_{\pm0.53}}$ &
$34.49_{\pm1.66}$ & $\mathbf{34.70_{\pm1.96}}$ &
$34.77_{\pm1.39}$ & $\mathbf{34.90_{\pm1.28}}$ &
$36.35_{\pm3.37}$ & $\mathbf{36.59_{\pm2.50}}$ &
$34.92_{\pm5.12}$ & $\mathbf{35.88_{\pm3.57}}$\\
20 & 
$82.34_{\pm1.86}$ & $\mathbf{88.59_{\pm1.47}}$ & 
$85.55_{\pm0.99}$ & $\mathbf{88.77_{\pm1.08}}$ &
$34.23_{\pm1.24}$ & $\mathbf{34.80_{\pm1.61}}$ &
$34.83_{\pm1.33}$ & $\mathbf{34.92_{\pm1.58}}$ &
$35.63_{\pm4.37}$ & $\mathbf{37.30_{\pm4.52}}$ &
$34.57_{\pm4.63}$ & $\mathbf{34.66_{\pm3.92}}$\\
\bottomrule
\end{tabular}}
\end{small}
\end{center}
\end{table}
In this section, we evaluate the effectiveness of \module when applied to deeper GCN, using 4, 8, 12, 16, and 20-layer models across 6 datasets.
In addition to the standard GCN and GCN$_B$, we include two more variants: (1) GCN trained with DropEdge, and (2) DropEdge$_B$, where \module is integrated into a GCN trained with DropEdge.

These variants are included to assess whether \module can provide further improvements beyond what DropEdge achieves, particularly in deep architectures. This is motivated by the original DropEdge paper~\cite{dropedge}, which highlights its effectiveness in alleviating oversmoothing and demonstrates more substantial performance gains in deeper GNNs.
The results are presented in \Cref{rebuttal_deeper_gcn_table}.

\module improves performance in 28 out of 30 configurations, demonstrating that its effectiveness is robust to architectural depth. 
Notably, performance gains tend to increase with depth, suggesting that deeper GNNs are more susceptible to structural inconsistencies as representations undergo repeated aggregation—thus creating greater opportunities for improvement via enhanced edge-robustness.

As expected, DropEdge yields more substantial improvements in deeper architectures, while its effects remain marginal in shallow ones.
Importantly, integrating AGG$_B$ into DropEdge-trained models significantly boosts performance in 28 out of 30 settings. This demonstrates that \module provides a distinct benefit—specifically, enhanced edge-robustness.
These results reinforce our claim that DropEdge alone is insufficient for addressing edge-robustness, regardless of model depth, and that \module offers a principled approach to mitigating structural inconsistencies in deep GNNs.

\section{Additional Experiments on Larger Datasets}
To further demonstrate the broad applicability of \module, we include results on three larger datasets: Arxiv~\cite{ogbn}, Reddit~\cite{sage}, and Flickr~\cite{graphsaint}, all of which are loaded from the Deep Graph Library (DGL).
As shown in \Cref{tbl:large dataset}, AGG$_B$ consistently improves performance across all three datasets, in line with earlier findings.
In all cases, the performance gains primarily stem from improvements on low-degree and heterophilous nodes, highlighting that the observed benefits are indeed driven by enhanced edge-robustness.
It is also worth noting that these results are obtained without any hyperparameter tuning.
This suggests that further improvements are possible with tuning—as the larger performance gain observed on Arxiv in \Cref{tbl:Overall}.

Additionally, we conduct the edge removal experiments described in \Cref{test_edge_removal_appendix}.
The performance degradation from random edge removal is significantly reduced when using \module, further validating its effectiveness on larger-scale datasets.

\begin{table}[t]
\caption{Accuracy from 5 independent runs of a 2-layer GCN (hidden dimension: 256), and after integration of \module ($\text{GCN}_B$), evaluated on the public split of larger datasets: Flickr, Ogbn-arxiv, and Reddit. The GCN is trained using fixed hyperparameters (learning rate: 0.001, dropout: 0.5), while \module is trained with fixed parameters ($\lambda = 1.0$, DropEdge rate: 0.5). Integrating \module consistently improves overall performance, with the largest gains observed in low-degree, heterophilic nodes.}
\label{tbl:large dataset}
\medskip
\begin{center}
\begin{small}
\resizebox{0.8\linewidth}{!}{
\begin{tabular}{l|cccccc}
\toprule
& \multicolumn{2}{c}{\texttt{Flickr}} &\multicolumn{2}{c}{\texttt{Arxiv}} &\multicolumn{2}{c}{\texttt{Reddit}} \\
\cmidrule(l){2-3}
\cmidrule(l){4-5}
\cmidrule(l){6-7}
&GCN &$\text{GCN}_B$
&GCN &$\text{GCN}_B$ 
&GCN &$\text{GCN}_B$  \\
\midrule
\cellcolor{blue!20}{\textbf{Overall Accuracy}} &
$52.50_{\pm0.15}$ & $\mathbf{52.84_{\pm0.08}}$& 
$71.06_{\pm0.10}$ & $\mathbf{71.37_{\pm0.10}}$ &
$94.61_{\pm0.01}$ & $\mathbf{94.89_{\pm0.01}}$\\
\midrule
High-degree Nodes &
$49.66_{\pm0.26}$ & $\mathbf{49.87_{\pm0.18}}$ & 
$\mathbf{80.06_{\pm0.11}}$ & $79.95_{\pm0.14}$ &
$98.81_{\pm0.01}$ & $\mathbf{98.84_{\pm0.01}}$\\
\midrule
\cellcolor{black!20}{Low-degree Nodes} &
$53.93_{\pm0.28}$ & $\mathbf{54.58_{\pm0.16}}$ & 
$62.32_{\pm0.05}$ & $\mathbf{63.16_{\pm0.04}}$ &
$88.06_{\pm0.01}$ & $\mathbf{88.84_{\pm0.03}}$\\
\midrule
Homophilous Nodes &
$\mathbf{80.58_{\pm0.49}}$ & $80.44_{\pm0.10}$ & 
$\mathbf{94.87_{\pm0.02}}$ & $94.60_{\pm0.03}$ &
$\mathbf{99.74_{\pm0.01}}$ & $99.66_{\pm0.01}$\\
\midrule
\cellcolor{black!20}{Heterophilous Nodes} &
$18.00_{\pm0.07}$ & $\mathbf{18.18_{\pm0.07}}$ & 
$32.30_{\pm0.09}$ & $\mathbf{33.78_{\pm0.08}}$ &
$84.10_{\pm0.01}$ & $\mathbf{85.01_{\pm0.02}}$\\
\bottomrule

\end{tabular}}
\end{small}
\end{center}

\end{table}
\begin{table}[t]
\caption{Accuracy under random edge removal (\%) in test, using the same experimental settings as \cref{tbl:large dataset}. \module consistently improves performance across all edge removal ratios and datasets, with greater gains at higher removal ratios.}
\medskip
\begin{center}
\begin{small}
\resizebox{0.65\linewidth}{!}{
\begin{tabular}{l|cccccc}
\toprule
& \multicolumn{2}{c}{\texttt{Flickr}} &\multicolumn{2}{c}{\texttt{Arxiv}} &\multicolumn{2}{c}{\texttt{Reddit}} \\
\cmidrule(l){2-3}
\cmidrule(l){4-5}
\cmidrule(l){6-7}
&GCN &$\text{GCN}_B$
&GCN &$\text{GCN}_B$ 
&GCN &$\text{GCN}_B$  \\
\midrule
100\% &
$52.50_{\pm0.15}$ & $\mathbf{52.84_{\pm0.08}}$& 
$71.06_{\pm0.10}$ & $\mathbf{71.37_{\pm0.10}}$ &
$94.61_{\pm0.01}$ & $\mathbf{94.89_{\pm0.01}}$\\
\midrule
75\% &
$50.95_{\pm0.12}$ & $\mathbf{51.56_{\pm0.11}}$ & 
$69.58_{\pm0.08}$ & $\mathbf{70.59_{\pm0.07}}$ &
$94.47_{\pm0.01}$ & $\mathbf{94.87_{\pm0.01}}$\\
\midrule
50\% &
$47.49_{\pm0.52}$ & $\mathbf{49.66_{\pm0.21}}$ & 
$67.44_{\pm0.11}$ & $\mathbf{69.00_{\pm0.06}}$ &
$94.18_{\pm0.01}$ & $\mathbf{94.82_{\pm0.01}}$\\
\midrule
25\% &
$41.31_{\pm1.06}$ & $\mathbf{45.82_{\pm0.34}}$ & 
$62.50_{\pm0.08}$ & $\mathbf{65.50_{\pm0.06}}$ &
$93.52_{\pm0.01}$ & $\mathbf{94.38_{\pm0.03}}$\\
\midrule
0\% &
$31.73_{\pm1.33}$ & $\mathbf{39.57_{\pm0.70}}$ & 
$45.36_{\pm0.19}$ & $\mathbf{54.56_{\pm0.04}}$ &
$38.91_{\pm0.01}$ & $\mathbf{45.48_{\pm0.27}}$\\
\bottomrule

\end{tabular}}
\end{small}
\end{center}

\end{table}

\section{Generalizing \Cref{corollary_discrepancy_cascade_gnn} to Other GNN Architectures}
\label{appendix:discrepancy_general_gnn}
In this section, we extend our discrepancy analysis—\Cref{corollary_discrepancy_cascade_gnn}—beyond GCN to a broader class of GNN architectures.
We provide proofs for three representative models: GraphSAGE, GIN, and GAT, which are also used in our experiments to assess the generalizability of \module, as presented in \Cref{section_generalization_architectures}.
These results theoretically demonstrate that the issue of non-optimizable edge-robustness is not specific to GCN, but is a fundamental limitation shared across various GNN architectures—one that \module is designed to address.
We omit SGC from this analysis, as it can be regarded as a linearized variant of GCN and is therefore already covered by the proof in \Cref{proof_section3}.

\subsection{GraphSAGE}
The GraphSAGE layer is formulated as:
\begin{equation*}
    \mH^{(l)}=\sigma(\mathrm{AGG}^{(l)}(\mH^{(l-1)}, \mA)+\mH^{(l-1)}\mW_2^{(l)})=\sigma(\hat{\mA}\mH^{(l-1)}\mW^{(l)}_1+\mH^{(l-1)}\mW_2^{(l)}),
\end{equation*}
where $\hat{\mA} = \mD^{-1} \mA$ denotes the normalized adjacency matrix. Then, the discrepancy at layer $l$ satisfies:
\begin{equation*}
\begin{split}
 \| \mH_1^{(l)} - \mH_2^{(l)} \|_2 
 &\leq L_\sigma \| \mathrm{AGG}^{(l)}(\mH_1^{(l-1)}, \mA_1) - \mathrm{AGG}^{(l)}(\mH_2^{(l-1)}, \mA_2)+ \mH_1^{(l-1)}\mW_2^{(l)} - \mH_2^{(l-1)}\mW_2^{(l)} 
 \|_2 \\
 &\leq  L_\sigma \| \hat{\mA_1}\mH_1^{(l-1)}\mW^{(l)}_1 - \hat{\mA_2}\mH_2^{(l-1)}\mW^{(l)}_1 \|_2 + L_\sigma \|\mW_2^{(l)}\|_2 \| \mH_1^{(l-1)}-\mH_2^{(l-1)} \|_2 \\
 &\leq L_\sigma (\| \mW_1^{(l)} \|_2 + \| \mW_2^{(l)} \|_2) \| \mH_1^{(l-1)}-\mH_2^{(l-1)} \|_2 + L_\sigma|V| \| \mW_1^{(l)} \|_2 \| \hat{\mA}_1-\hat{\mA}_2 \|_2 \\
 &\leq C_1 \| \mH_1^{(l-1)}-\mH_2^{(l-1)} \|_2 + C_2
\end{split}
\end{equation*}
where $C_1=L_\sigma (\| \mW_1^{(l)} \|_2 + \| \mW_2^{(l)} \|_2), C_2=L_\sigma|V| \| \mW_1^{(l)} \|_2 \| \hat{\mA}_1-\hat{\mA}_2 \|_2$.

\subsection{GIN}
The GIN layer is formulated as:
\begin{equation*}
    \mH^{(l)}=\mathrm{MLP}^{(l)}(\mathrm{AGG}^{(l)}(\mH^{(l-1)}, \mA)+(1+\epsilon^{(l)})\mH^{(l-1)})
    =\mathrm{MLP}^{(l)}(\mA\mH^{(l-1)}+(1+\epsilon^{(l)})\mH^{(l-1)}),
\end{equation*}
where $\epsilon^{(l)}$ is a learnable scalar at layer $l$. Then, the discrepancy at layer $l$ satisfies:
\begin{equation*}
\begin{split}
 \| \mH_1^{(l)} - \mH_2^{(l)} \|_2 
 &\leq C \| 
 \mathrm{AGG}^{(l)}(\mH_1^{(l-1)}, \mA_1) - \mathrm{AGG}^{(l)}(\mH_2^{(l-1)}, \mA_2) +
 (1+\epsilon^{(l)})\mH_1^{(l-1)} - (1+\epsilon^{(l)})\mH_2^{(l-1)}
 \|_2 \\
 &\leq  C \| \mA_1\mH_1^{(l-1)}-\mA_2\mH_2^{(l-1)} \|_2 + C |1+\epsilon^{(l)}| \| \mH_1^{(l-1)}-\mH_2^{(l-1)} \|_2 \\
 &\leq C(\|\mA_1\|_2 + |1+\epsilon^{(l)}|) \| \mH_1^{(l-1)}-\mH_2^{(l-1)}\|_2 
 + C|V|\|\mA_1-\mA_2\|_2 \\
 &\leq C_1 \| \mH_1^{(l-1)}-\mH_2^{(l-1)} \|_2 + C_2
\end{split}
\end{equation*}
where $C$ is the discrepancy bound of $\mathrm{MLP}^{(l)}$, $C_1=C(\|\mA_1\|_2 + |1+\epsilon^{(l)}|), C_2=C|V|\|\mA_1-\mA_2\|_2$.

\subsection{GAT}
The GAT layer is defined as:
\begin{align*}
    \mathbf{h}_i^{(l)}
    &=\sigma\left(\sum_{j\in\mathcal{N}_i}\alpha_{ij}\mathbf{W}_1^{(l)}\mathbf{h}_j^{(l-1)}\right), \\
    \alpha_{ij}
    &=\frac{\exp(\text{LeakyReLU}(\mathbf{a}^\top [\mathbf{W}'\mathbf{h}_i^{(l-1)} \Vert \mathbf{W}'\mathbf{h}_j^{(l-1)}]))}{\sum_{k\in\mathcal{N}_i} \exp(\text{LeakyReLU}(\mathbf{a}^\top [\mathbf{W}'\mathbf{h}_i^{(l-1)} \Vert \mathbf{W}'\mathbf{h}_k^{(l-1)}]))}\\
    &=\frac{\exp(\text{LeakyReLU}(\mathbf{p}^\top \mathbf{W}'\mathbf{h}_i^{(l-1)}+\mathbf{q}^\top \mathbf{W}'\mathbf{h}_j^{(l-1)}))}{\sum_{k\in\mathcal{N}_i} \exp(\text{LeakyReLU}( 
\mathbf{p}^\top \mathbf{W}'\mathbf{h}_i^{(l-1)}+\mathbf{q}^\top \mathbf{W}'\mathbf{h}_k^{(l-1)}) ))},
\end{align*}
where $\mathbf{a} \in \mathbb{R}^{2F'}$ is the original attention weight vector, and $\mathbf{p}, \mathbf{q} \in \mathbb{R}^{F'}$ are its components such that $\mathbf{a}^\top [\cdot \Vert \cdot] = \mathbf{p}^\top (\cdot) + \mathbf{q}^\top (\cdot)$.
The induced attention matrix can be interpreted as:
\begin{align*}
    \mA^*=\text{RowNorm}(\text{exp}(\text{LeakyReLU}(\text{diag}(\mathbf{p}^\top \mathbf{W}'{\mathbf{H}^{(l-1)}}^\top)\mathbf{A}+\mathbf{A}\text{diag}(\mathbf{q}^\top \mathbf{W}'{\mathbf{H}^{(l-1)}}^\top) ) ) ).
\end{align*}
Since $\mA^*$ is row-stochastic, we have $\|\mA^*\|_2 = 1$. The discrepancy is thus bounded by:
\begin{align*}
\| \mH_1^{(l)} - \mH_2^{(l)} \|_2 
 &\leq L_\sigma \| \mathrm{AGG}^{(l)}(\mH_1^{(l-1)}, \mA_1) - \mathrm{AGG}^{(l)}(\mH_2^{(l-1)}, \mA_2) \|_2 \\
 &\leq L_\sigma \| 
 \mA_1^*\mH_1^{(l-1)}\mW_1^{(l)}-\mA_2^*\mH_2^{(l-1)}\mW^{(l)}
 \|_2 \\
 &\leq   L_\sigma \|\mW_1^{(l)}\| \| \mA_1^*\mH_1^{(l-1)}-\mA_1^*\mH_2^{(l-1)} + \mA_1^*\mH_2^{(l-1)} - \mA_2^*\mH_2^{(l-1)} \|_2 \\
 &\leq L_\sigma \|\mW_1^{(l)}\| \| \mH_1^{(l-1)}-\mH_2^{(l-1)}\|_2 
 + L_\sigma \|\mW_1^{(l)}\|_2 \|\mH_2^{(l-1)}\|_2 \|\mA_1^*-\mA_2^*\|_2 \\
 &\leq C_1 \| \mH_1^{(l-1)}-\mH_2^{(l-1)} \|_2 + C_2,
\end{align*}
where $C_1=L_\sigma \| \mW_1^{(l)} \|_2 , C_2=C_1|V| \| \mA_1^*-\mA_2^* \|_2$.


\end{document}